\documentclass{article}

\usepackage{microtype}
\usepackage{graphicx}
\usepackage{subcaption}
\usepackage{booktabs} 
\usepackage{stfloats}
\usepackage{wrapfig}

\usepackage{hyperref}

\usepackage[accepted]{icml2026}

\usepackage{amsmath}
\usepackage{amssymb}
\usepackage{mathtools}
\usepackage{amsthm}
\usepackage{bm}
\usepackage{makecell}

\usepackage[capitalize,noabbrev]{cleveref}

\theoremstyle{plain}
\newtheorem{theorem}{Theorem}[section]

\theoremstyle{definition}

\theoremstyle{remark}

\usepackage[textsize=tiny]{todonotes}

\icmltitlerunning{Diffusing to Coordinate: Efficient Online Multi-Agent Diffusion Policies}

\begin{document}

\twocolumn[
  \icmltitle{Diffusing to Coordinate: Efficient Online Multi-Agent Diffusion Policies}




  \icmlsetsymbol{equal}{*}

  \begin{icmlauthorlist}
    \icmlauthor{Zhuoran Li}{yyy}
    \icmlauthor{Hai Zhong}{yyy}
    \icmlauthor{Xun Wang}{yyy}
    \icmlauthor{Qingxin Xia}{sch}
    \icmlauthor{Lihua Zhang}{sch}
    \icmlauthor{Longbo Huang\textsuperscript{*}}{yyy}
  \end{icmlauthorlist}

  \icmlaffiliation{yyy}{Institute for Interdisciplinary Information Sciences, Tsinghua University}
  \icmlaffiliation{sch}{ByteDance}

  \icmlcorrespondingauthor{Longbo Huang}{longbohuang@tsinghua.edu.cn}

  \icmlkeywords{Machine Learning, ICML}

  \vskip 0.3in
]



\printAffiliationsAndNotice{}  

\begin{abstract}
Online Multi-Agent Reinforcement Learning (MARL) is a prominent framework for efficient agent coordination. Crucially, enhancing policy expressiveness is pivotal for achieving superior performance. Diffusion-based generative models are well-positioned to meet this demand, having demonstrated remarkable expressiveness and multimodal representation in image generation and offline settings. Yet, their potential in online MARL remains largely under-explored. A major obstacle is that the intractable likelihoods of diffusion models impede entropy-based exploration and coordination. To tackle this challenge, we propose among the first \underline{O}nline off-policy \underline{MA}RL framework using \underline{D}iffusion policies (\textbf{OMAD}) to orchestrate coordination. Our key innovation is a relaxed policy objective that maximizes scaled joint entropy, facilitating effective exploration without relying on tractable likelihood. Complementing this, within the centralized training with decentralized execution (CTDE) paradigm, we employ a joint distributional value function to optimize decentralized diffusion policies. It leverages tractable entropy-augmented targets to guide the simultaneous updates of diffusion policies, thereby ensuring stable coordination. Extensive evaluations on MPE and MAMuJoCo establish our method as the new state-of-the-art across $10$ diverse tasks, demonstrating a remarkable $2.5\times$ to $5\times$ improvement in sample efficiency.

\end{abstract}

\section{Introduction}

Multi-agent reinforcement learning (MARL~\citep{oliehoek2008optimal}) provides a robust framework for decision-making in complex systems, ranging from autonomous driving~\citep{zhou2021smarts} to robot swarms~\citep{chen2025multi}. A central challenge in these domains is agent coordination under severe non-stationarity~\citep{tan1993multi}. To mitigate this issue, Centralized Training with Decentralized Execution (CTDE)~\citep{bernstein2002complexity,rashid2018qmix} has emerged as the predominant paradigm and has achieved remarkable success. However, the prevailing use of unimodal policy distributions (e.g., Gaussian) struggles to represent the highly complex and multimodal coordination strategies~\citep{wang2024diffusion,maefficient}, which is essential for multi-agent interactions.

In order to address the limitations of policy expressiveness, diffusion-based generative models~\citep{ho2020denoising} have emerged as a powerful alternative. Through progressive denoising, diffusion models demonstrate exceptional capability in modeling highly multimodal distributions~\citep{song2020score}. This strong expressiveness naturally makes them particularly appealing for sequential decision-making~\citep{janner2022planning,wang2022diffusion}, where optimal policies often involve diverse strategies. 
Indeed, diffusion policies have demonstrated strong performance in offline single- and multi-agent settings with expert demonstrations, achieving substantial gains in manipulation and locomotion tasks~\citep{li2023conservatismdiffusionpoliciesoffline,chi2023diffusion,huang2024diffuseloco}.

However, extending diffusion models to online MARL remains challenging, particularly in achieving efficient training from scratch without relying on offline demonstrations. Specifically, the intractable likelihoods~\citep{ho2020denoising} inherent in diffusion models hinder efficient exploration. This intractability precludes conventional entropy regularization~\citep{haarnoja2017reinforcement}.
Such incompatibility is severely exacerbated in multi-agent settings where coordinated exploration is essential to navigate non-stationarity and discover cooperative strategies~\citep{liumaximum}. 
While single-agent adaptations exist~\citep{maefficient,celikdime}, extending these frameworks to off-policy MARL remains a non-trivial endeavor.

We tackle these challenges by proposing the first \underline{O}nline off-policy \underline{MA}RL framework tailored for \underline{D}iffusion policies called \textbf{OMAD}. Our algorithm is a novel tractable maximum entropy paradigm for multi-agent diffusion policies. We break the computational barriers of prior generative MARL approaches by introducing a tractable relaxed joint policy objective. Under this unified framework, we orchestrate a holistic optimization process: training a centralized distributional critic to capture aleatoric uncertainty, guiding per-agent diffusion policies via synchronized updates, and auto-tuning temperature constraints. Facilitating the high expressiveness of diffusion policies, our framework empowers agents to achieve consistent coordination, setting a robust and scalable standard for online generative MARL.
We summarize our key contributions as follows:
\begin{itemize} 
\item We propose the first online off-policy diffusion framework for MARL. To overcome the intractability of likelihood computation, we introduce a relaxed objective based on scaled joint entropy. This formulation extends tractable exploration to the multi-agent domain, enabling agents to effectively coordinate their exploration in high-dimensional joint action spaces.
\item We develop a unified off-policy learning mechanism within the CTDE paradigm. By leveraging a joint distributional value function, we optimize distinct policies for all agents simultaneously under a single unified objective. This synchronous update strategy effectively mitigates non-stationarity, fostering efficient collaboration and stable convergence.
\item We evaluate our \textbf{OMAD} algorithm on standard continuous control benchmarks, including Multi-Agent Particle Environments (MPE) and Multi-Agent MuJoCo (MAMuJoCo). Our method establishes a new state-of-the-art across $10$ diverse scenarios, consistently surpassing baselines with a substantial $2.5\times$ to $5\times$ gain in sample efficiency.

\end{itemize}

\section{Related Works}

Due to space constraints, we highlight key works on online MARL and diffusion policies in RL. A comprehensive discussion is provided in Appendix~\ref{appendix:detailed_related_works}.

\subsection{Online Multi-Agent Reinforcement Learning}
Online multi-agent reinforcement learning (MARL) has made substantial progress under the centralized training with decentralized execution (CTDE) paradigm~\citep{oliehoek2008optimal,rashid2018qmix}. Representative methods such as MADDPG~\citep{lowe2017multi}, MAPPO~\citep{yu2021surprising}, COMA~\citep{foerster2018counterfactual}, and value decomposition approaches including VDN~\citep{sunehag2017value}, QMIX~\citep{rashid2018qmix}, and QPLEX~\citep{wangqplex} significantly improve coordination and learning stability in cooperative tasks. More recent works focus on addressing non-stationarity, exploration efficiency, and agent heterogeneity, including RES-QMIX~\citep{pan2021regularized}, SMPE~\citep{kontogiannisenhancing} and RCO~\citep{lirevisiting}. In particular, heterogeneity-aware frameworks such as HARL and its variants~\citep{zhong2024heterogeneous,liumaximum} represent the state of the art by explicitly coordinating heterogeneous agents. Despite these advances, existing online MARL methods predominantly rely on unimodal policy representations, limiting their ability to model complex, multi-modal coordination behaviors.

\subsection{Diffusion Policies in Reinforcement Learning}
Diffusion models are powerful generative models with strong capacity for representing expressive and multi-modal distributions~\citep{ho2020denoising,song2020score}, and have been widely adopted in sequential decision-making, particularly in imitation learning~\citep{xielatent} and offline reinforcement learning~\citep{wang2022diffusion}. Representative approaches such as Diffuser~\citep{janner2022planning}, Decision Diffuser~\citep{ajayconditional}, and Diff-QL~\citep{wang2022diffusion} achieve significant gains by modeling complex action or trajectory distributions, with extensions to offline multi-agent settings improving policy diversity~\citep{li2023conservatismdiffusionpoliciesoffline} and data efficiency~\citep{zhu2024madiff,lidof2025}.

In contrast, extending diffusion policies to online reinforcement learning remains challenging due to intractable likelihoods and inefficient exploration. Recent methods mitigate these issues via value-guided~\citep{psenka2024learning,wang2024diffusion} or entropy-regularized objectives~\citep{celikdime,maefficient}, but are largely limited to single-agent settings and do not scale to multi-agent systems with coordinated exploration and non-stationarity. Our work bridges this gap by proposing among the first online off-policy diffusion frameworks tailored for MARL.

\section{Preliminary}

\subsection{MARL and Efficient Value Estimation}

We study cooperative multi-agent reinforcement learning in the framework of decentralized partially observable Markov decision processes (Dec-POMDPs)~\citep{oliehoek2016concise}. An $N$-agent cooperative task is represented by the tuple $G = \langle \mathcal{I}, \mathcal{S}, \mathcal{O}, \mathcal{A}, \Pi, \mathcal{P}, \mathcal{R}, N, \gamma \rangle$. Here, \(\mathcal{I}=\{1,\ldots,N\}\) denotes the agents, \(\mathcal{S}\) is the global state space, \(\mathcal{O}=(\mathcal{O}_1,\ldots,\mathcal{O}_N)\) represents the local observations, and \(\mathcal{A}=(\mathcal{A}_1,\ldots,\mathcal{A}_N)\) means the joint action space. Each agent \(i\) selects \(a_i \in \mathcal{A}_i\) according to its policy \(\pi_i \in \Pi_i\). The environment evolves via \(\mathcal{P}(s' \mid s, a)\), and agents receive rewards from the shared function \(\mathcal{R}: \mathcal{S} \times \mathcal{A} \to \mathbb{R}\). The goal of standard MARL is to learn a joint policy \(\bm{\pi}=(\pi_1,\ldots,\pi_N)\) maximizing the expected discounted team return $\mathbb{E}_{\bm{\pi}}\!\left[\sum_{t=0}^{\infty}\gamma^t r_t\right]$, given the discounted factor $\gamma \in [0,1)$ and the sum of rewards for each agent $r_t=\sum_{i=1}^N r_i^t$. 

To prevent premature convergence to sub-optimal deterministic policies in complex multi-agent interactions~\citep{liumaximum}, \textit{Maximum Entropy RL} framework~\citep{ziebart2008maximum,haarnoja2018soft} augments the standard return with policy entropy to promote exploration. In multi-agent settings, this translates to maximizing the expected return alongside the joint policy entropy $\mathcal{H}(\bm{\pi}(\cdot|s_t))$ that:
$\mathbb{E}_{\bm{\pi}}\left[\sum_{t=0}^{\infty}\gamma^t r_t + \alpha\mathcal{H}(\bm{\pi}(\cdot|s_t))\right]$, where $\alpha$ regulates the exploration-exploitation trade-off. For decentralized policies $\bm{\pi}(\mathbf{a}|s)=\prod_{i} \pi_i(a_i|s)$~\citep{zhong2024heterogeneous}, the framework yields a softened Bellman operator, where the soft Q-function is optimized via the Bellman residual:
\begin{equation}
    \begin{aligned}
        \mathcal{L}_Q(\theta)&=
\mathbb{E}_{(s,a,r,s')\sim\mathcal{D}}
[(Q_\theta(s,a)-r\\
&-\gamma (Q^{\pi}(s',a')-\alpha\sum_{i=1}^{N}\log \pi_i(a'_i|s')))^2],
    \end{aligned}
\end{equation}
and the associated soft policy iteration admits the closed-form policy improvement update:
\begin{equation}
\label{eq:iterative_optimize_policies}
\begin{aligned}
        \pi_{k+1}
&=\arg\min_{\pi}
D_{\mathrm{KL}}\!\left(
\pi(\cdot|s)\,\Big\|\,
\frac{\exp\!\left(\tfrac{1}{\alpha}Q^{\pi_k}(s,\cdot)\right)}{Z(s)}
\right),
\end{aligned}
\end{equation}
where $Z(s)=\int \exp\!\left(\tfrac{1}{\alpha}Q^{\pi_k}(s,a')\right)\text{d}a'$ is the normalizing factor that does not affect policy optimization.

To ensure reliable value estimation, we list two complementary techniques: CrossQ and Distributional Q-learning. CrossQ~\citep{bhattcrossq} eliminates target networks by applying Batch Normalization to concatenated state-action pairs and applying a stop-gradient to the next-state estimates for high update-to-data ratios. Distributional Q-learning~\citep{bellemare2017distributional} models the full return distribution $Z_{\phi}(s, a)$ rather than its expectation $Q_{\phi}(s,a)=\mathbb{E}[Z_{\phi}(s,a)]$ as higher-order return statistics to provide richer signals, mitigating the high variance of value functions.

\subsection{Diffusion policy}\label{sec:pre_diffusion_policies}
Diffusion policy~\citep{maefficient,dongmaximum,wang2025enhanced,celikdime} parameterizes the policy as the terminal distribution of a state-conditioned process, governed by continuous-time Ornstein–Uhlenbeck~\citep{sarkka2019applied} (OU) dynamics over state $s$ and $t \in [0, T]$,
\begin{equation}
\label{eq:forwardprocess}
\begin{aligned}
    \mathrm{d}a_t=-\beta_t a_t \mathrm{d}t
+ \eta \sqrt{2 \beta_t}\,\mathrm{d}B_t,a_0\sim\pi_{0}(\cdot|s).
\end{aligned}
\end{equation}
Here, $\beta_t$ and $\eta$ are diffusion and drift coefficients, $B_t$ is standard Brownian motion, and $\pi_0$ is the target policy. Let $\pi_t$ denote the marginal at time $t$; with a proper schedule, this distribution converges to $\pi_T \approx \mathcal{N}(0, \eta^2 I)$.
Denoising diffusion policy is defined by the reverse dynamics (Eq.~\ref{eq:backwardprocess}) starting from $a_T \sim \mathcal{N}(0, \eta^2 I)$, where a network $f_\theta$ approximates the score $\nabla_{a_t} \log \pi_t$ for denoising,
\begin{equation}
\label{eq:backwardprocess}
\begin{aligned}
    \mathrm{d}a_t=(-\beta_t a_t - 2 \eta^{2} \beta_t f_\theta(a_t,s,t))\mathrm{d}t
+ \eta \sqrt{2 \beta_t}\,\mathrm{d}\tilde{B}_t.
\end{aligned}
\end{equation}
This yields samples $a_0 \sim \pi_\theta$. Theoretically, accurate score estimation $f_\theta \approx \nabla \log \pi_t$ ensures $\pi_\theta$ recovers the target $\pi_0$. Applying Euler-Maruyama discretization~\citep{sarkka2019applied}, the discrete forward and reverse dynamics are:
\begin{align}
a_{h+1} &= a_h - \beta_h a_h \, \delta + \epsilon_h , \label{eq:forwardsg} \\
a_{h-1} &= a_h + 
(\beta_h a_h + 2\eta^2\beta_hf_{\theta}(a_h,s,hT/H))\, \delta 
+ \xi_h . \label{eq:backwardsg}
\end{align}
Here, $\epsilon_h, \xi_h \sim \mathcal{N}(0, 2\eta^2\beta_h \delta I)$, step size $\delta=T/H$, and $a_h \equiv a_{th/T}$.  
Under EM discretization, the noising and denoising processes exhibit the following joint distributions:
\begin{align}
\pi(a_{0:H}|s)
&= \pi(a_0|s)
   \prod_{h=0}^{H-1}
   \pi(a_{h+1}|a_h, s), \label{eq:forward_distributionsg} \\
\pi_{\theta}(a_{0:H}|s)
&= \pi_{\theta}(a_H|s)
   \prod_{h=1}^{H}
   \pi_{\theta}(a_{h-1}|a_h, s). \label{eq:backward_distributionsg}
\end{align}
The output $a=a_0$ constitutes the action, providing expressiveness for complex, multimodal distributions. 
This helps expose inherent computational bottlenecks in entropy estimation, motivating our proposed online MARL framework.

\section{Bridge to OMAD and Theoretical Insight}

Despite the generative prowess of diffusion models, their integration into online MARL is obstructed by three inherent mechanism mismatches as follows.

First, entropy-based exploration is hindered by intractable likelihoods. In contrast to the Gaussian policies, diffusion models lack tractable likelihoods~\citep{ho2020denoising}. This prohibits exact joint entropy computation, blocking maximum entropy objectives~\citep{maefficient} crucial for preventing premature convergence in online MARL.

Second, an architectural conflict with CTDE complicates deployment. Standard monolithic backbones violate~\citep{zhu2024madiff} the independence required for decentralized execution. Conversely, naive independent models fail to capture complex joint dependencies, necessitating a factorized design with decentralization to facilitate data collection.

Finally, a misalignment in coordination optimization persists as a significant open challenge. The policy's iterative denoising nature inherently amplifies the difficulty of optimizing for sustained step-wise inter-agent coordination~\citep{pan2021regularized,lirevisiting}. Formulating a loss that effectively guides this multi-step generation towards global cooperation remains a significant algorithmic challenge.

To overcome these obstacles, we first establish a theoretical foundation for tractable entropy estimation. Assuming the joint policy factorizes as $\bm{\pi_{\theta}}(a|s)=\prod_{i=1}^N\pi_{\theta_i}(a^i|s)$, we leverage variational properties to derive a tractable lower bound, formalized as follows:

\begin{theorem}
\label{theorem:entropy}
(Entropy Lower Bound for Decentralized Diffusion Policies) Given that the joint policy is factorized into independent diffusion processes, the entropy of the joint distribution $\mathcal{H}(\bm{\pi_{\theta}}(a|s))$ is lower-bounded by the sum of individual variational bounds:
\begin{equation}
\begin{aligned}
\mathcal{H}(\bm{\pi_{\theta}}(a|s)) &\ge \sum_{i=1}^{N} l_{\pi_{\theta_i}}(a^i|s),
\end{aligned}
\end{equation}
\end{theorem}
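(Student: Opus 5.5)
The plan has two parts: first split the joint entropy into per-agent entropies using the factorization, then bound each per-agent entropy from below with a variational (ELBO-type) argument built from the forward and reverse chains of Eqs.~\eqref{eq:forward_distributionsg}--\eqref{eq:backward_distributionsg}. The quantity $l_{\pi_{\theta_i}}(a^i|s)$ is not defined before the statement, so I will define it as the natural per-agent variational bound, namely
\begin{equation*}
l_{\pi_{\theta_i}}(a^i|s) = \mathbb{E}_{\pi_{\theta_i}(a^i_{0:H}|s)}\!\left[-\log \frac{\pi_{\theta_i}(a^i_{0:H}|s)}{q_i(a^i_{1:H}|a^i_0,s)}\right],
\end{equation*}
where $q_i$ is the forward noising chain of Eq.~\eqref{eq:forward_distributionsg}, restricted to agent $i$ and conditioned on $a^i_0$. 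This expression is tractable, because both $\pi_{\theta_i}(a^i_{0:H}|s)$ and $q_i$ are products of Gaussian transitions.

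\textbf{Step 1 (factorization).} Since $\bm{\pi_\theta}(a|s)=\prod_i \pi_{\theta_i}(a^i|s)$, we have $\log \bm{\pi_\theta}(a|s)=\sum_i \log\pi_{\theta_i}(a^i|s)$. Taking the expectation under the product measure, each term depends only on the marginal of $a^i$, so
\begin{equation*}
\mathcal{H}(\bm{\pi_\theta}(\cdot|s))=\sum_{i=1}^N \mathcal{H}(\pi_{\theta_i}(\cdot|s)).
\end{equation*}
This identity is exact, so it suffices to show $\mathcal{H}(\pi_{\theta_i}(\cdot|s))\ge l_{\pi_{\theta_i}}(a^i|s)$ for each $i$ and then sum over agents.

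\textbf{Step 2 (per-agent variational bound).} Fix an agent $i$ and drop the superscript. The marginal likelihood $\pi_\theta(a_0|s)=\int \pi_\theta(a_{0:H}|s)\,\mathrm{d}a_{1:H}$ is intractable, but the chain rule gives $\pi_\theta(a_{0:H}|s)=\pi_\theta(a_0|s)\,\pi_\theta(a_{1:H}|a_0,s)$. Hence for any conditional density $q(a_{1:H}|a_0,s)$,
\begin{equation*}
-\log\pi_\theta(a_0|s) = -\log\frac{\pi_\theta(a_{0:H}|s)}{q(a_{1:H}|a_0,s)} + \log\frac{\pi_\theta(a_{1:H}|a_0,s)}{q(a_{1:H}|a_0,s)}.
\end{equation*}
Taking the expectation over $a_{0:H}\sim\pi_\theta(a_{0:H}|s)$, the second term becomes $\mathbb{E}_{a_0}\big[D_{\mathrm{KL}}(\pi_\theta(a_{1:H}|a_0,s)\,\|\,q(a_{1:H}|a_0,s))\big]\ge 0$. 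The left-hand side becomes $\mathcal{H}(\pi_\theta(\cdot|s))$, since the expectation of a function of $a_0$ alone only needs the marginal. Dropping the nonnegative KL term gives exactly $\mathcal{H}\ge l$ with $q$ taken to be the forward process. Summing over $i$ and applying Step 1 then proves the theorem. The gap in the bound is $\sum_i\mathbb{E}[D_{\mathrm{KL}}]$, which vanishes when the forward chain matches the posterior of the reverse chain.

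\textbf{Main obstacle.} The main difficulty is getting the direction and the sampling measure right. This is the reverse of the usual data ELBO: there one takes expectations under data and obtains an upper bound on $-\log p$, whereas here the expectation must be under the model's own reverse chain so that the leftover term is a KL divergence with the correct sign. I would check this carefully and confirm that the conditional $\pi_\theta(a_{1:H}|a_0,s)$ is well defined, which holds because all transitions are nondegenerate Gaussians. Beyond that, I only need to note that $l$ expands into a sum of Gaussian log-ratio terms, computable in closed form up to constants. That last point concerns tractability, not the inequality itself.
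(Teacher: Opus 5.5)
Your proof is correct and takes essentially the same route as the paper. The paper also splits the joint entropy exactly into per-agent entropies under the product policy, then substitutes $\pi_{\theta_i}(a^i_0|s)=\pi_{\theta_i}(a^i_{0:H}|s)/\pi_{\theta_i}(a^i_{1:H}|a^i_0,s)$ and drops the nonnegative KL between the reverse-chain posterior and the forward noising chain. Your $l_{\pi_{\theta_i}}$ coincides with the paper's definition, which is given just after the statement. Writing the gap as $\mathbb{E}_{a^i_0}\bigl[D_{\mathrm{KL}}(\pi_{\theta_i}(a^i_{1:H}|a^i_0,s)\,\|\,q_i(a^i_{1:H}|a^i_0,s))\bigr]$ is slightly more careful than the paper, which leaves the outer expectation over $a^i_0$ implicit.
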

where $l_{\pi_{\theta_i}}(a^i|s) = \mathbb{E}_{\pi_{\theta_i}}\left[\log\frac{\pi_i(a^i_{1:H}|a^i_{0},s)}{\pi_{\theta_i}(a^{i}_{0:H}|s)}\right]$ is the evidence lower bound for each agent. {The expectation is taken with respect to the agent’s diffusion policy $\pi_{\theta_i}$, representing the entire sequential trajectory generation process.}
\begin{proof}
(Sketch) The proof exploits the factorized structure of our multi-agent policy, which allows the intractable joint entropy to decompose into a summation of individual marginal entropies. We then derive a tractable evidence lower bound (ELBO) for each agent's diffusion trajectory via variational inference to complete the proof. Detailed derivations are provided in Appendix~\ref{appendix:proof}.
\end{proof} 

Since the exact computation of the joint policy entropy is intractable, the derived lower bound serves as a tractable surrogate for approximating the Maximum Entropy RL objective~\citep{haarnoja2017reinforcement}. Facilitating this helps us effectively incentivize exploration while harnessing the high expressiveness of diffusion policies. {Under the CTDE framework, this formulation transforms joint entropy into a computable metric, providing unified global exploration bonuses that actively incentivize robust exploration while unlocking the full expressiveness of diffusion policies.}

{Crucially, this theoretical foundation drives our proposed OMAD algorithm, which goes far beyond a simple policy factorization. Rather, OMAD is a holistic, system-level architecture meticulously designed for efficient multi-agent collaboration. By integrating this tailored entropy bound with our Centralized Joint Distributional Critic, OMAD actively resolves the severe non-stationarity and compounded variance of multi-agent interactions. This comprehensive design ensures agents do not merely act independently, but achieve highly stable, synchronized off-policy learning for state-of-the-art cooperative performance.}

\begin{figure*}
    \centering
    \includegraphics[width=1.0\linewidth]{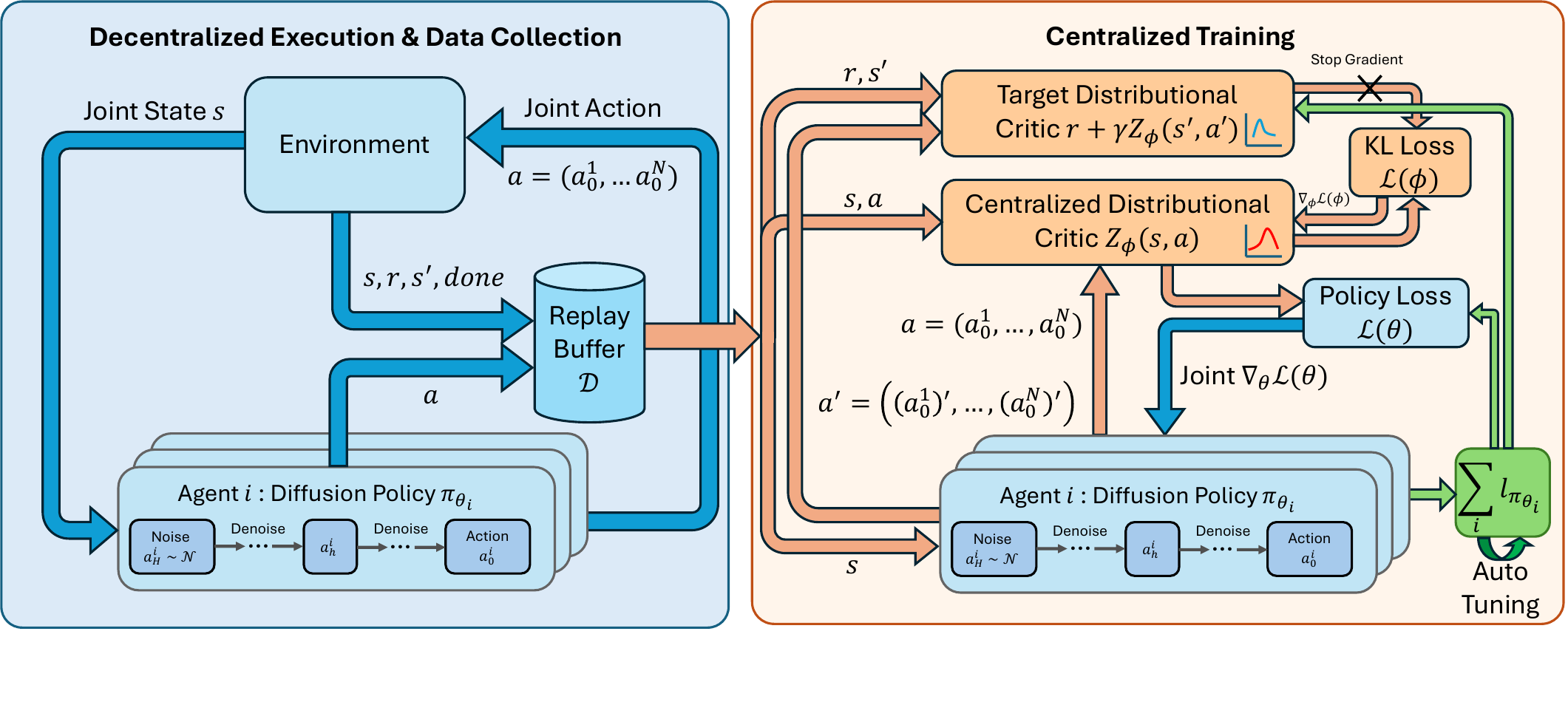}
    \vspace{-0.45in}
    \caption{The CTDE framework of OMAD. The left panel illustrates Decentralized Execution, where agents independently sample actions via a denoising diffusion process. The right panel depicts Centralized Training, where a shared Distributional Critic provides unified Value Guidance to jointly optimize policies, stabilized by adaptive regularization for the entropy evidence lower bound.}
    \label{fig:omad}
\end{figure*}

\section{The OMAD Method}

This section introduces our proposed diffusion policy algorithm for online multi-agent reinforcement learning, with a systemic overview depicted in Figure~\ref{fig:omad}. To tackle the challenges of applying diffusion models in online MARL, our framework integrates two strategic designs. (i) First, we construct a specialized network architecture that seamlessly unifies policy learning with iterative denoising action generation. (ii) Second, to strictly adhere to the CTDE paradigm, we introduce a centralized distributional critic. This critic plays a pivotal role in guiding synchronized policy training by maximizing a scaled joint entropy Evidence Lower Bound (ELBO). We detail the formulation and implementation of these components below.

\subsection{Decentralized Diffusion Policy Formulation}

To scale diffusion-based control to multi-agent domains, we employ a factorized policy architecture $\bm{\pi_{\theta}}(a|s) = \prod_{i=1}^N \pi_{\theta_i}(a^i|s)$~\citep{zhong2024heterogeneous}. This architecture, corresponding to the decentralized execution phase (blue region in Figure~\ref{fig:omad}), eschews joint space modeling to ensure efficient data collection while retaining policy expressiveness.

We instantiate a reverse-time SDE introduced in Sec.~\ref{sec:pre_diffusion_policies} for each agent $i$, parameterizing the drift with a specific score network $f_{\theta_{i}}(a_t^i, s, t) \approx \nabla_{a_t^i}\log \pi_{i,t}(a_t^i|s)$. This decoupling facilitates efficient parallel sampling via the Euler-Maruyama solver, where the synchronized generation over $H$ steps is governed by:
\begin{equation}
\label{eq:backward}
\begin{aligned}
a_{h-1}^i &= a_h^i + 
\underbrace{(\beta_h a_h^i + 2\eta^2\beta_hf_{\theta_{i}}(a_h^i,s,t_h))}_{\text{Drift (Score Guidance)}} \, \delta 
+ \xi_h^i,
\end{aligned}
\end{equation}
where $t_h = hT/H$ and $\xi_h^i$ follows Eq.~\eqref{eq:backwardprocess}. The action $a^i_0$ is obtained by iteratively denoising $a^i_H \sim \mathcal{N}(0, \eta^2 I)$. Such scheme effectively captures complex multi-modal joint distributions through the composition of individual policies.

\subsection{Online Centralized Training the Diffusion Policy}

With the tractable entropy lower bound established in Theorem~\ref{theorem:entropy}, we can now operationalize the Maximum Entropy principle within our multi-agent framework. Specifically, we substitute the computationally intractable exact entropy with the derived ELBO, utilizing it as a practical surrogate to guide exploration. This substitution enables stable and effective policy optimization even in high-dimensional joint action spaces. In this subsection, we formulate the overall objective for online centralized training, defining how the diffusion policy is jointly optimized with the scaled entropy regularization to balance reward maximization and exploration. This process corresponds to the orange-shaded phase on the right side of Figure~\ref{fig:omad}. 

\textbf{Tractable Maximum Entropy MARL Objective}. 
Leveraging the derived entropy evidence lower bound in Theorem~\ref{theorem:entropy}, we formulate a tractable surrogate objective for Maximum Entropy MARL. In contrast to prior works that eschew entropy regularization~\citep{maefficient} due to the intractable likelihoods of implicit policies, we explicitly incorporate the variational lower bound $l_{\pi_{\theta_i}}$ as an intrinsic exploration bonus. The resulting joint objective is to maximize the cumulative discounted sum of the reward and the scaled entropy surrogate:
\begin{equation}
\label{eq:soft_ma_objective}
\begin{aligned}
    J(\bm{\pi}_{\bm{\theta}}(\cdot|s)):=\mathbb{E}_{\bm{\pi_{\theta}}}\left[\sum_{t=0}^{\infty}\sum_{i=1}^{N}\gamma^t(r_t^i+\alpha l_{\pi_{\theta_i}}(a^i_{t},s_t))\right].
\end{aligned}
\end{equation}
Here, $\alpha > 0$ is the temperature parameter regulating the stochasticity of the policy, consistent with the maximum entropy reinforcement learning framework~\citep{haarnoja2018soft}. This objective fundamentally transforms the optimization landscape: by maximizing Eq.~\eqref{eq:soft_ma_objective}, agents are driven to maximize expected returns while simultaneously maintaining high stochasticity in their diffusion policies, thereby preventing premature convergence. 

To effectively optimize this tractable entropy-regularized objective, we adopt the CTDE paradigm. Crucially, we introduce a centralized distributional critic to model the value distribution of the joint state-action pairs, providing a more robust signal to guide the diffusion policy optimization.

\textbf{Centralized Joint Distributional Critic Learning}. To tackle the inherent multimodality of multi-agent diffusion, we propose a Joint Distributional Critic $Z_{\phi}(s, a)$. Departing from standard CTDE methods that compress returns into the expected joint return $Q_{\phi}(s, a) = \mathbb{E}[Z_{\phi}(s, a)]$, we model the full value distribution~\citep{bellemare2017distributional}. Crucially, this disentangles true coordination signals from the compounded stochasticity of interacting diffusion policies for robust supervision that stabilizes joint optimization.

We implement the critic (located in the top-right of Figure~\ref{fig:omad}) using the CrossQ architecture~\citep{bhattcrossq} adapted for the multi-agent domain. We define the distributional Bellman operator applied to the joint return distribution. Specifically, the target distribution is constructed by augmenting shared rewards with the derived sum of individual entropy lower bounds, thereby enforcing a unified objective:
\begin{equation}
\label{eq:soft_maqfunction}
\begin{aligned}
\mathcal{T}Z_{\phi}(s,a):=\sum_{i=1}^{N}r^{i}+\gamma(Z_{\phi}(s',a')+\alpha\sum_{i=1}^{N}l_{\pi_{\theta'_i}}((a')^{i}|s')),
\end{aligned}
\end{equation}
where $a' = ((a')^{1}, \dots, (a')^{N})$ denotes the joint action sampled from the target diffusion policies $\pi_{\theta_i}$ given the next state $s'$. Here, the equality holds in distribution. This formulation ensures that the critic evaluates the global quality of joint actions while accounting for the exploration bonuses of all agents, thereby aligning individual incentives with the global objective, which is written as:
\begin{equation}
\label{eq:qlearning_loss}
\begin{aligned}
\mathcal{L}(\phi)=\mathbb{E}[\text{KL}(Z_{\phi}(s,a),\text{sg}(\mathcal{T}Z_{\phi}(s,a)))]+\xi\mathcal{H}(Z_{\phi}(s,a)).
\end{aligned}
\end{equation}
To efficiently optimize this target, we instantiate a multi-agent CrossQ~\citep{bhattcrossq} mechanism by combining stop-gradients $\text{sg}(\cdot)$ with input Batch Normalization, enabling high data efficiency without divergence. Furthermore, our entropy-regularized distributional critic ($Z_{\phi}$ with coefficient $\xi$) captures the granular uncertainty of joint interactions, surpassing expectation-based baselines.

\textbf{Synchronized Diffusion Policy Optimization.} We formulate the multi-agent policy optimization as an iterative projection problem, aiming to align the joint policy $\bm{\pi}_{\bm{\theta}}$ with the distribution derived by the learned critic, i.e., $\bm{\pi}_{\bm{\theta}}(a|s) \propto \exp(\frac{1}{\alpha}Q_{\phi}(s, a))$ corresponding to Eq.~\eqref{eq:iterative_optimize_policies} (which is located in the bottom-right of Figure~\ref{fig:omad}). This corresponds to minimizing the KL divergence term, expressed as: $D_{\text{KL}}(\bm{\pi}_{\bm{\theta}}(\cdot|s) \| \frac{1}{Z}\exp(\frac{1}{\alpha}Q_{\phi}(s, \cdot)))$. However, direct optimization is intractable due to the iterative diffusion process. We thus derive a tractable surrogate loss by expanding the divergence over latent trajectories, yielding a unified synchronized objective:
\begin{eqnarray}
 \hspace{-.1in}   &&\mathcal{L}(\theta)=\mathbb{E}\Bigg[\sum_{i=1}^{N}\log \pi_{\theta_i}(a_{H}^{i}|s)-\frac{1}{\alpha}Q_{\phi}(s,a_{0})\label{eq:ma_diffusion_policy_loss_expand}\\
  \hspace{-.1in}    &&\qquad\qquad +\sum_{i=1}^{N}\sum_{h=1}^{H}\log
\frac{
\pi_{\theta_i}(a_{h-1}^{i}| a_{h}^{i}, s)
}{
\pi_i(a_{h}^{i}| a_{h-1}^{i}, s)
}\Bigg]+\log Z(s).\nonumber
\end{eqnarray}

Here, $a_0$ denotes the terminal joint action synthesized through the synchronized reverse diffusion trajectories $\{a_H^i,\dots,a_0^i\}_{i=1}^N$ governed by Eq.~\eqref{eq:backward}, $\pi_i(a_{h}^{i}| a_{h-1}^{i}, s)$ represents the forward noising distribution constructed with $\frac{1}{Z(s)}\exp(\frac{1}{\alpha}Q_{\phi}(s, \cdot))$ as the target policy, and $Q_{\phi}(s,a_{0})=\mathbb{E}[Z_{\phi}(s,a_{0})]$ denotes the expectation of $Z_{\phi}(s,a_{0})$.
Crucially, this formulation establishes a shared optimization landscape where agents perform synchronized updates via a holistic loss. In stark contrast to decoupled approaches as HARL~\citep{zhong2024heterogeneous,liumaximum} that rely on fragmented local losses, our method orchestrates a unified update guided by the global value function, ensuring superior coordination stability and efficiency. Detailed derivations of the objective function are provided in the Appendix~\ref{appendix:policy_loss_calculation}.

\textbf{Variational Surrogate for Temperature Auto-Tuning}.
To dynamically regulate the exploration-exploitation trade-off without manual tuning, we formulate the temperature $\alpha$ adjustment as a dual constrained optimization problem (shaded green in Figure~\ref{fig:omad}). Uniquely, we adapt the maximum entropy principle to the diffusion setting by enforcing a constraint on the \textit{joint variational lower bound} rather than the intractable exact entropy. Specifically, we optimize $\alpha$ to maintain the aggregate ELBO above a target threshold:
\begin{equation}
\label{eq:loss_entropy_coefficient_term}
\mathcal{L}(\alpha)=\alpha\left(\mathcal{H}_{\text{target}}-\sum_{i=1}^{N}l_{\pi_{\theta_i}}(a^i|s)\right).
\end{equation}
Here, $\mathcal{H}_{\text{target}}$ defines the minimum exploration budget. Minimizing this objective automatically scales $\alpha$ to maintain the aggregate ELBO at the target level.

OMAD establishes a tractable paradigm by deriving a decomposable entropy lower bound aligned with diffusion denoising. Our framework integrates a centralized distributional critic, synchronized policy updates, and auto-tuned temperature. This design ensures robust high-entropy coordination while preserving decentralized execution, providing a scalable foundation for diffusion-based online MARL.

\subsection{Algorithm and Discussion}

\begin{algorithm}[ht]
   \caption{Online Multi-Agent Diffusion Policy ($\mathtt{OMAD}$)}
\begin{algorithmic}[1]
   \STATE {\bfseries Initialize:} Distributional state-action function $Z_{\phi}(s,a)$, diffusion policy and the target term $\pi_{\theta_i},\pi_{\theta'_i}$ for $i=1,2,...,N$ agents, temperature $\alpha$, policy delay $d_l$, target entropy $\mathcal{H}_{\text{target}}$, learning rate $\eta$, replay buffer $\mathcal{D}$ and the threshold buffer size $L_{\text{init}}$. {// Initialization} \label{alg:initialization}
   \FOR{$m = 1$ to $M$ episodes}
    \STATE Sample trajectory $(s_0,a_0,r_0,s_1,a_1,...,s_T,a_T,r_T)$ for $T$ timesteps using $\pi_{\theta'_i}$ and insert them into $\mathcal{D}$. \label{alg:datageneration} {// Trajectory Generation} 
    \IF{Buffer length $L_\mathcal{D}>L_{\text{init}}$}
    \STATE Sample $B=\{(s,a,r,s')\}$ from $\mathcal{D}$. \label{alg:sampling} {// Sampling} 
    \STATE Sample actions $a'=((a')^{1},...,(a')^{N})$ using $\pi_{\theta'_i}$, calculate $\mathcal{L}(\phi)$ using Eq.~\eqref{eq:qlearning_loss} and optimize $\phi$. {// Critic Optimization}\label{alg:optimizecritic}
    \IF{$m \text{ mod }d_l=0$}
    \STATE Sample actions $a=(a^{1},...,a^{N})$ using $\pi_{\theta_i}$, calculate $\mathcal{L}(\theta)$ using Eq.~\eqref{eq:ma_diffusion_policy_loss_expand} and jointly optimize $\{\theta_i\}_{i=1}^N$. {// Diffusion Policy Optimization}\label{alg:optimizediffusionpolicy}
    \ENDIF
    \STATE Calculate $l_{\pi_{\theta_i}}$ and optimize the temperature $\alpha$ via minimizing Eq.~\eqref{eq:loss_entropy_coefficient_term}. {// Temperature Optimization}\label{alg:optimizetemperature}
    \STATE Update target networks $\theta'_i\leftarrow\rho\theta'_i+(1-\rho)\theta_i$. {// Target Network Optimization}\label{alg:optimizetargetnetwork}
    \ENDIF
    \ENDFOR
\end{algorithmic}
\label{alg:mdp}
\end{algorithm}

The training procedure of the proposed online multi-agent diffusion policy is summarized in Algorithm \ref{alg:mdp}.
Line 1 initializes the distributional critic, diffusion policies for all agents, target networks, and the replay buffer.
Line 3 collects trajectories by executing the target diffusion policies and stores the resulting transitions for off-policy learning.
When the replay buffer is sufficiently populated, Line 5 samples mini-batches for training.
Line 6 updates the distributional state–action value function using actions sampled from the target diffusion policies.
Line 8 jointly optimizes the diffusion policies by minimizing the diffusion-based policy objective guided by the learned distributional critic.
Line 9 adapts the entropy temperature to enforce the target entropy, and Line 10 softly updates the target policy networks to stabilize training.

OMAD is a novel online diffusion-based MARL algorithm, departing from restrictive Gaussian policies used in methods, including HARL~\citep{zhong2024heterogeneous,liumaximum}. By modeling policies as generative diffusion processes, OMAD captures complex, multi-modal joint action distributions essential for sophisticated coordination. Distinct from discrete-time~\citep{wang2024diffusion,maefficient} or deterministic ODE-based approaches~\citep{dongmaximum}, OMAD employs an SDE formulation to ensure sustained stochasticity. Advancing beyond DIME~\citep{celikdime}, we derive a factorized entropy bound with a centralized distributional critic to address non-stationarity, offering a sample-efficient off-policy alternative to HADQ~\citep{anonymous2025heterogeneous}.

\begin{figure*}[htbp]
    \centering
    \includegraphics[width=1.0\linewidth]{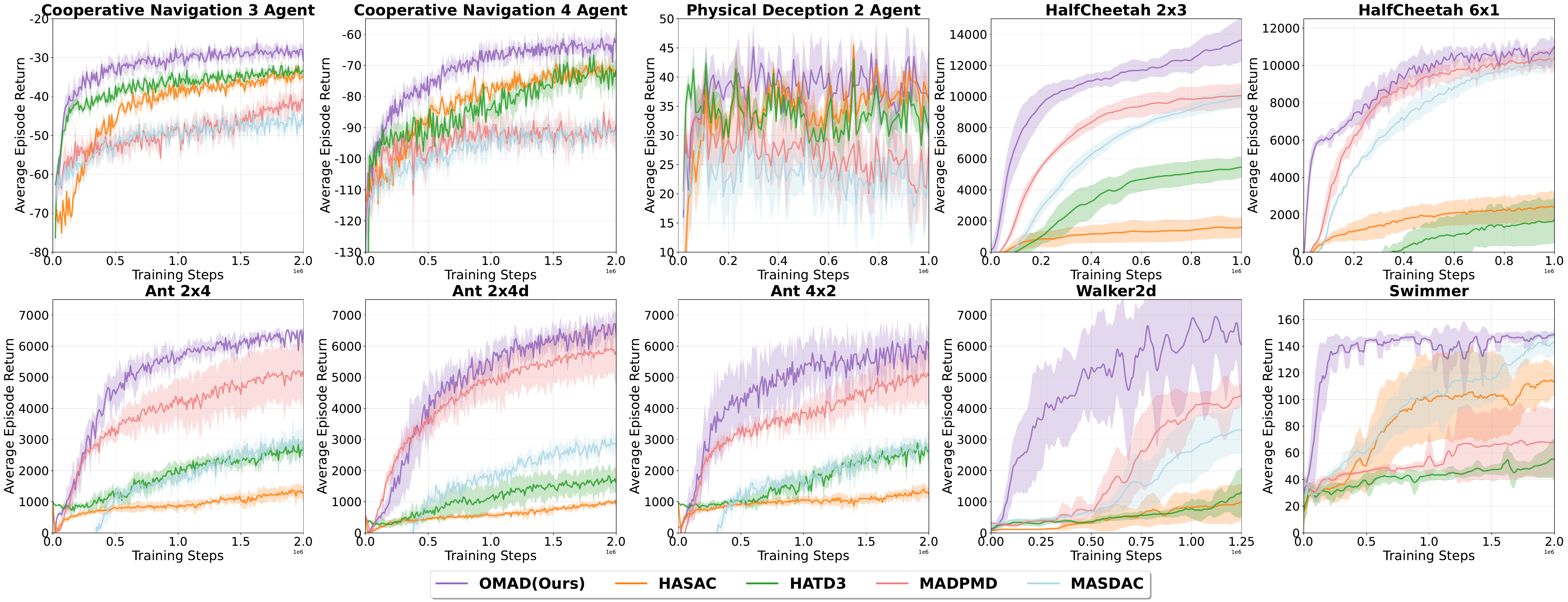}
    \caption{Learning curves comparing OMAD with state-of-the-art online MARL baselines (HATD3 and HASAC) and two representative extensions of the diffusion policies (MADPMD and MASDAC) on MPE and MAMuJoCo benchmarks. The plots report the average episode return over training steps, averaged across 5 random seeds, with shaded regions indicating one standard deviation. Results demonstrate that OMAD consistently achieves faster convergence and superior final performance across both low-dimensional MPE tasks and high-dimensional continuous control environments.}
    \label{fig:results}
\end{figure*}

{Overall, OMAD establishes a centralized distributional critic with synchronized diffusion policies within a tractable variational framework, providing a robust off-policy paradigm for scalable, sample-efficient coordination. This formulation hinges on two complementary pillars: a tractable ELBO variant that enables practical maximum entropy exploration, and an entropy-augmented joint critic that robustly synchronizes decentralized policy updates. We next empirically validate how OMAD's high expressiveness manifests as efficient exploration and superior performance.}

\section{Experiments}

We empirically validate the efficacy of OMAD across diverse continuous-control multi-agent benchmarks. In this section, we detail our experimental setup and present a rigorous comparative analysis against state-of-the-art baselines. Notably, we visually demonstrate how diffusion expressiveness unlocks superior state exploration. Finally, we conduct ablation studies to isolate the contribution of each key algorithmic component, empirically justifying the rationality of our proposed framework.\footnote{Code is available at: https://github.com/lizr16/OMAD}

\subsection{Experiment Setup}

\textbf{Environments.} We evaluate our method on two widely adopted multi-agent benchmarks: the Multi-Agent Particle Environments (MPE)~\citep{lowe2017multi} and the challenging, high-dimensional Multi-Agent MuJoCo (MAMuJoCo) tasks~\citep{peng2021facmac}.
We evaluate on representative tasks from MPE (Physical Deception, Cooperative Navigation) involving particle cooperation, and MAMuJoCo (HalfCheetah, Ant, Walker, Swimmer) requiring coordinated locomotion. Details are provided in Appendix~\ref{appendix:env}.

\textbf{Baselines:} We compare our algorithm with the following state-of-the-art baseline 
online MARL algorithms: HATD3~\citep{zhong2024heterogeneous} and HASAC~\citep{liumaximum}. Besides, we compare our algorithm with the extension of the single-agent diffusion-based policy as MADPMD and MASDAC~\citep{maefficient}. Each algorithm is executed for $5$ random seeds and the mean performance and the standard deviation for $10$ episodes are presented. A detailed description of hyperparameters, neural network structures, and setup can be found in Appendix \ref{appendix:hyperparameters}.

\subsection{Experiment Results}

The experimental results in different tasks are shown in Figure~\ref{fig:results}. Our method consistently outperforms all compared baselines across a wide range of cooperative and competitive tasks, including MPE and challenging MAMuJoCo benchmarks. In low-dimensional MPE tasks, OMAD achieves faster convergence and higher final returns, reducing training steps to reach baseline peaks by up to $5\times$. This indicates improved coordination efficiency and training stability compared to both value-based and diffusion-based baselines. Notably, while MADPMD and MASDAC benefit from diffusion modeling, they exhibit slower convergence or inferior asymptotic performance, suggesting that directly extending diffusion policies to multi-agent settings is insufficient without effective centralized training and value guidance.

In high-dimensional continuous control environments such as Ant ($2\times4$, $2\times4$d, $4\times2$), HalfCheetah ($2\times3$, $6\times1$), Walker2d ($2\times3$), and Swimmer ($2\times1$), OMAD demonstrates substantial performance gains with lower variance and exhibits a consistent $2.5\sim5\times$ speedup in sample efficiency compared to the strongest baselines across these complex dynamics. These results highlight the scalability of our approach and its robustness in complex dynamics with strong inter-agent coupling. Overall, the empirical results validate that integrating diffusion-based policy generation with centralized value modeling leads to more stable learning and superior performance in both low- and high-dimensional multi-agent environments.

\begin{figure}[H]
    \centering
    \includegraphics[width=1.0\linewidth]{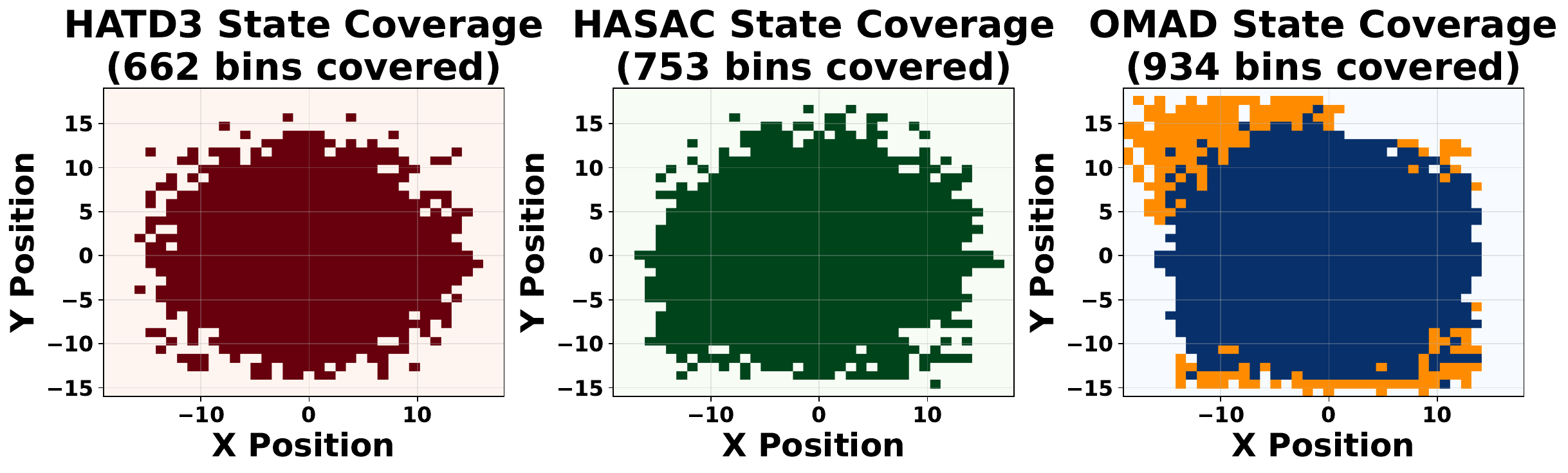}
    \caption{State coverage comparison on representative dimensions ($1$ and $21$) at $250$k steps. We visualize the state occupancy within the replay buffers for HATD3, HASAC, and OMAD. Colored regions (red, green, blue and orange) indicate visited states. OMAD achieves the broadest coverage, where orange regions are uniquely explored by OMAD, demonstrating superior exploration.}
    \label{fig:placeholder}
\end{figure}

To empirically validate the exploration benefits conferred by the high expressiveness of our diffusion policies, we analyze the state distributions accumulated in the replay buffers after the first $250$k training steps under the \textit{Ant} $2\times4$ task. {We discretize the 2D state space (dimensions $1$ and $21$ corresponding to the agent's spatial navigation) over the range $[-19, 18] \times [-16, 19]$ with a grid interval of $1.0$. Specifically, the space is partitioned using half-unit boundaries around integer centers, resulting in $38 \times 36 = 1368$ bins in total.} Under this metric, OMAD achieves the broadest coverage of $68.3\%$ (both blue and orange areas), representing a significant relative improvement of $41\%$ and $24\%$ over HATD3 ($48.4\%$) and HASAC ($55.0\%$), respectively. The regions highlighted in orange represent state spaces uniquely explored by OMAD, providing compelling evidence that the superior expressiveness of our entropy-regularized diffusion policy empowers the agent to escape local optima and cover a substantially wider solution space.

\subsection{Ablation Study}

To investigate the sensitivity of OMAD, we perform a detailed ablation study on the \textit{Ant} $2\times 4$ task ($10^6$ steps) to isolate the impact of three key components: (1) distributional hyperparameters ($V_{\max}$ and atom count); (2) diffusion denoising steps; and (3) the scaled entropy coefficient $\alpha$. {For a more comprehensive ablation study, additional results on the \textit{HalfCheetah} $6\times 1$ task are included in the Appendix~\ref{appendix:ablation_study}.}

\textbf{Impact of Distributional Hyperparameters}. {Figure~\ref{fig:ablation_vmax_atom} (Left) highlights the sensitivity to the value support upper bound, $V_{\max}$. Low thresholds (e.g., $200$) severely hamper performance due to distribution truncation, whereas performance stabilizes around $V_{\max} = 1000$, which effectively covers the full support of expected returns. Regarding distribution granularity (Figure~\ref{fig:ablation_vmax_atom}, Right), while generally robust, we identify $100$ intervals ($101$ discrete support points, or atoms). Lower atom resolutions fail to capture distribution complexity, while higher ones yield diminishing returns. We thus adopt $V_{\max}=1200$ and $101$ atoms as our default configuration under this task.}

\begin{figure}[htp]
    \centering
    \includegraphics[width=1.0\linewidth]{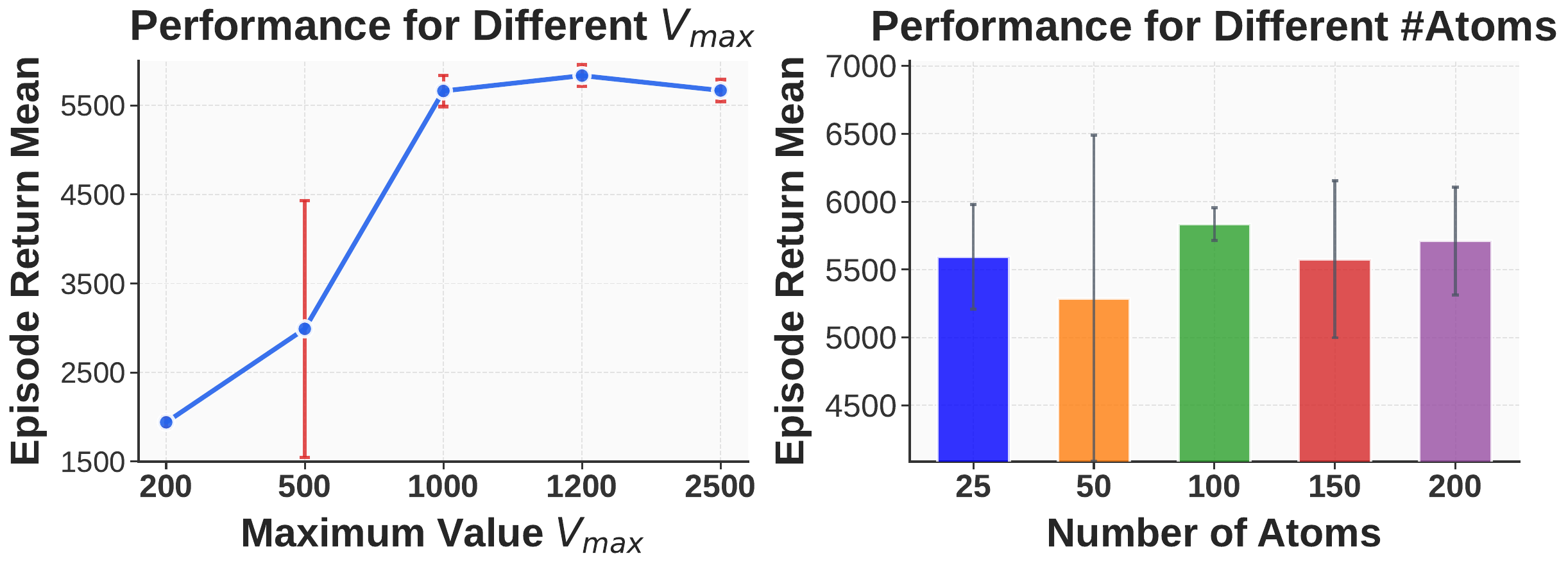}
    \caption{Ablation study on Distributional Q-function hyperparameters. \textbf{Left}: The sensitivity of the agent's performance to the support upper bound $V_{\max}$. \textbf{Right}: The impact of the discretization resolution on performance for different number of atoms.}
    \label{fig:ablation_vmax_atom}
\end{figure}

\textbf{Impact of Denoising Steps}. We analyze the performance-efficiency trade-off by varying denoising steps (Figure~\ref{fig:ablation_diff_steps}). Performance saturates at $8$ steps ($\approx6000$ return), matching $12$ and $16$-step models, whereas $2$ and $4$-step variants underperform ($<4500$). Conversely, computational costs increase linearly: raising steps from $8$ to $16$ increases training time from $22$h to $26$h and latency by $\sim3$s. Consequently, we select $8$ steps as the optimal balance, achieving asymptotic performance with minimal overhead.

\begin{figure}[htp]
    \centering
    \includegraphics[width=1.0\linewidth]{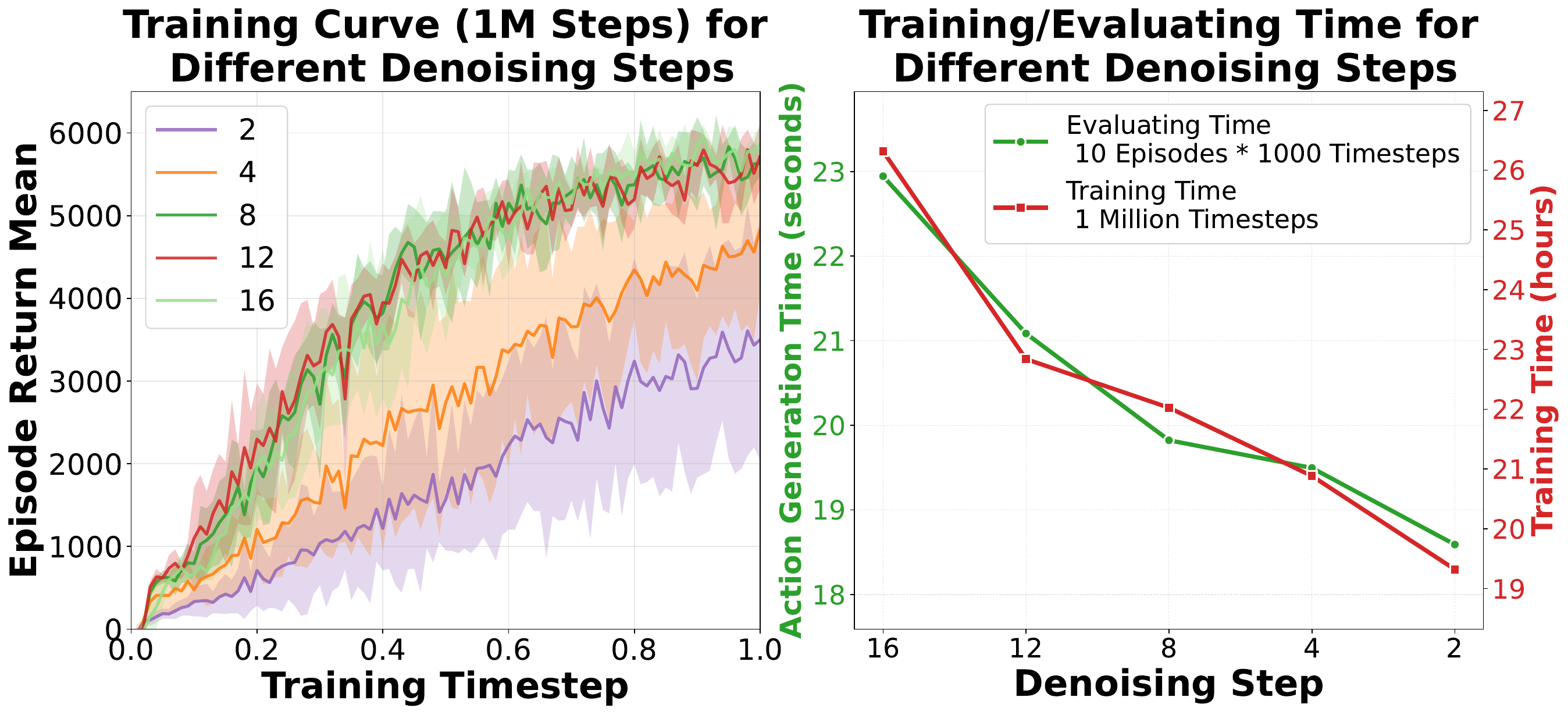}
    \caption{Ablation study on the number of denoising steps. \textbf{Left}: Episode return curves during training for varying denoising steps. \textbf{Right}: The trade-off between computational cost (training/inference time) and the number of steps.}
    \label{fig:ablation_diff_steps}
\end{figure}

\textbf{Impact of Adaptive Entropy Regularization.} As illustrated in Figure~\ref{fig:ablation_ent_coeff}, training OMAD algorithm exhibits high sensitivity to the entropy coefficient $\alpha$. A large fixed coefficient ($\alpha=0.1$) injects excessive stochasticity, destabilizing the denoising process and preventing convergence. Conversely, while manually tuned smaller values ($\alpha \in \{0.001, 0.01\}$) yield high returns, they lack adaptability. Crucially, our Auto-Tuning mechanism outperforms this rigid paradigm. By dynamically modulating $\alpha$, it flexibly balances exploration and exploitation, matching the peak performance ($\approx 6000$) of the best fixed settings. This adaptive capability eliminates the need for exhaustive hyperparameter search, facilitating the efficient and robust training of expressive diffusion policies.

\begin{figure}[htp]
    \centering
    \includegraphics[width=1.0\linewidth]{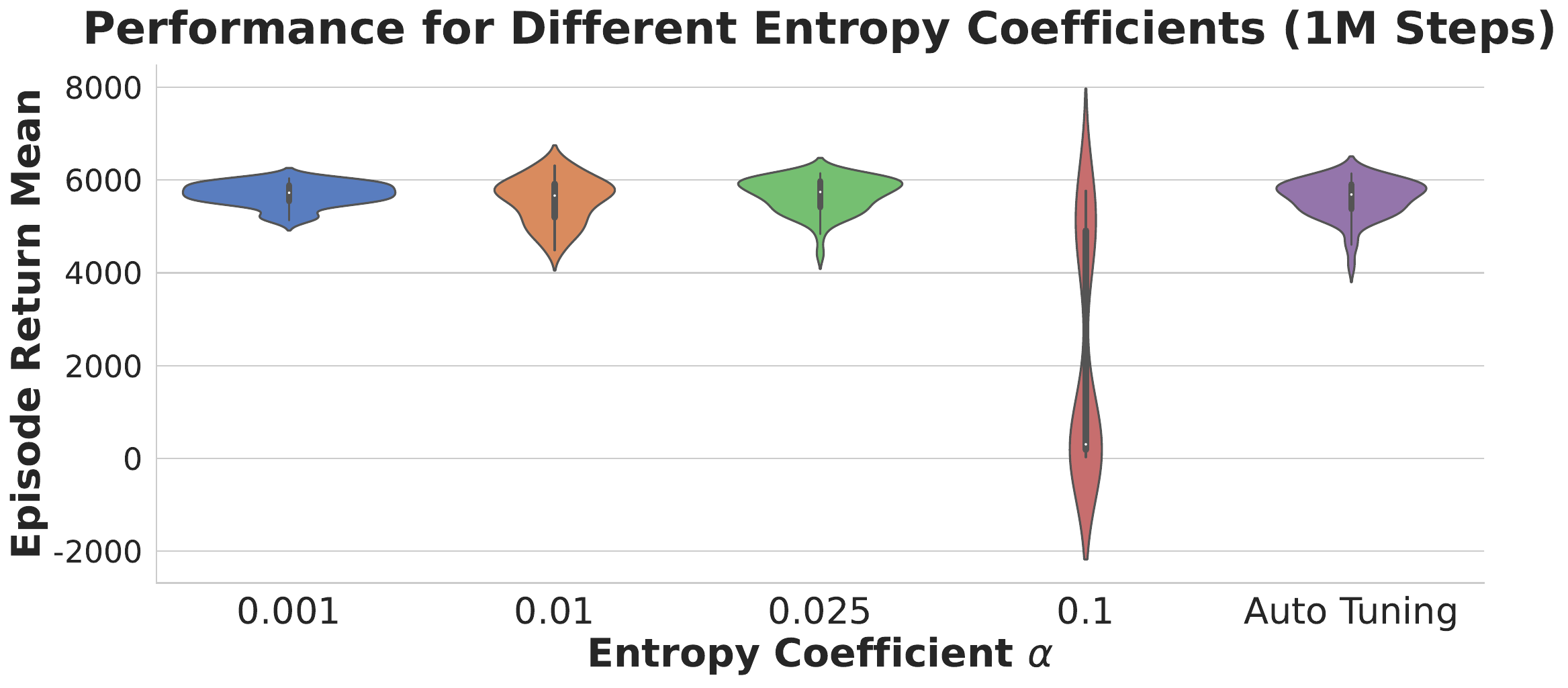}
    \caption{Efficacy of Auto-Tuning vs. Fixed Entropy Coefficients.}
    \label{fig:ablation_ent_coeff}
\end{figure}

\section{Conclusion}

We present OMAD, a novel algorithm that unlocks online diffusion-based MARL via a tractable variational lower bound for joint entropy. By synergizing this principled exploration with a centralized distributional critic, OMAD robustly mitigates non-stationarity and ensures stable coordination. Empirical results across MPE and MAMuJoCo confirm that OMAD establishes a new state-of-the-art, significantly outperforming both strong value-based baselines and naive diffusion extensions in sample efficiency and asymptotic performance.

\section*{Acknowledgement}

This work was supported by the National Natural Science Foundation of China Grant 52494974.

\section*{Impact Statement}

This paper presents work whose goal is to advance the field of Machine
Learning. There are many potential societal consequences of our work, none
which we feel must be specifically highlighted here.

\nocite{langley00}

\bibliography{example_paper}
\bibliographystyle{icml2026}

\newpage
\appendix
\onecolumn

\section{Detailed Related Works}\label{appendix:detailed_related_works}

Here we propose our detailed related works about online MARL, diffusion policies in offline RL and single-agent online RL scenarios. We also discuss online MARL challenges and compare OMAD against baselines.

\subsection{Online MARL}
Online multi-agent reinforcement learning (MARL) has achieved substantial progress in recent years. Under the centralized training with decentralized execution (CTDE) paradigm~\citep{oliehoek2008optimal,matignon2012coordinated}, a series of influential algorithms, including MADDPG~\citep{lowe2017multi}, MAPPO~\citep{yu2021surprising}, VDN~\citep{sunehag2017value}, COMA~\citep{foerster2018counterfactual}, QTRAN~\citep{son2019qtran} QMIX~\citep{rashid2018qmix}, and QPLEX~\citep{wangqplex} have significantly improved coordination and stability in cooperative tasks. In parallel, fully decentralized approaches such as IQL~\citep{tampuu2017multiagent}, IPPO~\citep{de2020independent}, and MATD3~\citep{ackermann2019reducing} achieve competitive performance without centralized critics, underscoring the scalability of online MARL. 

Recent advances further enhance robustness, exploration efficiency, and scalability, including RES-QMIX~\citep{pan2021regularized}, RACE~\citep{li2023race}, RCO~\citep{lirevisiting}, ACAC~\citep{jungagent}, MANGER~\citep{chen2025novelty}, and SMPE~\citep{kontogiannisenhancing}, as well as studies on communication-constrained MARL~\citep{yangmulti}. More recently, heterogeneity-aware frameworks such as HARL and its variants (HATRPO, HAPPO, HATD3, HASAC)~\citep{zhong2024heterogeneous,liumaximum}, provide principled solutions for coordinating heterogeneous agents and mitigating non-stationarity, representing the state of the art in online MARL.

\subsection{Diffusion Policy in Offline RL}

Diffusion-based generative models have achieved remarkable success in representing complex probability distributions with superior expressiveness and multimodality, demonstrating outstanding performance in image and video generation~\cite{sohl2015deep,song2019generative,ho2020denoising,song2020score,lu2022dpm,zhang2023adding,videoworldsimulators2024,zhang2025scaling,zhang2025turbodiffusion}. 
These advantages naturally benefit sequential decision-making problems, leading to the introduction of diffusion policies in imitation learning~\citep{pearceimitating,xielatent}, where expert datasets are available. In offline RL~\citep{wang2022diffusion}, diffusion models have been explored for trajectory generation, e.g., Diffuser~\citep{janner2022planning} and Decision Diffuser~\citep{ajayconditional}, expressive policy representations including Diff-QL~\citep{wang2022diffusion}, SfBC~\citep{chen2022offline}, EDP~\citep{kang2023efficient}, IDQL~\citep{hansen2023idql}, CEP~\citep{lu2023contrastive}, DAC~\citep{fangdiffusion}, value representations (VDQL~\citep{huvalue}), data augmentation, for instance, SynthER~\citep{lu2023synthetic} and GTA~\citep{lee2024gta} and robot learning~\citep{chi2023diffusion,huang2025diffuseloco}. 

Recent offline MARL methods leverage diffusion models for multi-modality and data scarcity, including
trajectory modeling (MADiff~\citep{zhu2024madiff}, DoF~\citep{lidof2025}), data synthesis (INS~\citep{fu2025ins}), diverse action generation (DOM2~\citep{li2023conservatismdiffusionpoliciesoffline}), and score decomposition (OMSD~\citep{qiao2025offlinemultiagentreinforcementlearning}). However, relying strictly on fixed, pre-collected datasets, offline MARL creates a fundamental gap from online settings, as it circumvents the critical need for active, coordinated exploration in non-stationary environments essential to tackle the severe non-stationary environments.

\subsection{Diffusion Policy in Online RL}
In online RL, the optimal policy cannot be directly sampled and must be learned via return- or value-based optimization, posing a fundamental challenge. To address this challenge, recent online diffusion-based RL methods adopt value-guided training, including gradient-updated behavior cloning (DIPO~\citep{yang2023policy}, DPPO~\citep{ren2024diffusion} and  DRAC~\citep{wang2025learning}), Q-gradient score matching (QSM~\citep{psenka2024learning}, CQSM~\citep{chengxiucontinuous}), entropy-controlled backpropagation through diffusion chains (DACER~\citep{wang2024diffusion}, DACER-v2~\citep{wang2025enhanced}, DIME~\citep{celikdime} and NCDPO~\citep{yang2025finetuningdiffusionpoliciesbackpropagation}), score approximation (MaxEntDP~\citep{dongmaximum}) and Q-weighted diffusion objectives with uniform exploration (QVPO~\citep{ding2024diffusionbased}, HAQO~\citep{anonymous2025heterogeneous} and RSM~\citep{maefficient}). 

In spite of policy-centric approaches, DIMA \citep{zhang2025revisiting} uses diffusion models as environment dynamics to boost data efficiency. {Beyond single-agent dynamics, recent advancements leverage diffusion models to infer global states from local observations in partially observable multi-agent cooperative environments~\citep{xu2024beyond, wang2025diffusion, yang2026globediff}. Furthermore, diffusion models have been successfully extended to online reinforcement learning as algorithm-agnostic generative policies, effectively decoupling policy optimization from action generation to improve training stability~\citep{zhang2025gorl}.}

However, extending these methods to online MARL faces three hurdles: {computational inefficiency}, as gradient propagation through diffusion chains~\citep{wang2024diffusion} scales poorly with the number of agents; {ineffective exploration}, where simple heuristics~\citep{dongmaximum} fail in high-dimensional joint spaces; and {lack of coordination}, as single-agent designs ignore non-stationarity inherent in multi-agent systems. To the best of our knowledge, our work is among the first to propose an efficient, off-policy diffusion framework specifically designed to overcome these hurdles, achieving superior training efficiency and coordination in online MARL.

{To the best of our knowledge, OMAD is among the first algorithm to achieve the integration of diffusion-based policies into the online, off-policy multi-agent reinforcement learning (MARL). This represents a fundamental paradigm shift from existing value-based MARL architectures (e.g., HARL~\citep{zhong2024heterogeneous,liumaximum}). While these conventional methods predominantly rely on restrictive parametric representations, such as unimodal Gaussian policies, that fundamentally limit agent behavior, OMAD explicitly models the policy as a generative diffusion process. This innovation unlocks the capacity to capture the highly complex, multimodal joint action distributions essential for sophisticated continuous control. Furthermore, OMAD distinguishes itself from recent offline diffusion MARL methods (e.g., MADIFF~\citep{zhu2024madiff}, DoF~\citep{lidof2025}), which circumvent the active exploration problem by relying entirely on fixed, pre-collected expert datasets. Instead, OMAD operates directly in an online setting from scratch, employing a tractable maximum entropy paradigm via our derived ELBO to efficiently drive active exploration without requiring prior data.}

{Compared to single-agent online diffusion baselines (e.g., DACER~\citep{wang2024diffusion}, DIME~\citep{celikdime}) and their naive multi-agent extensions (e.g., MADPMD, MASDAC), OMAD represents a significant leap in overcoming severe non-stationarity and the lack of unified value guidance. Naive multi-agent adaptations inherently struggle to coordinate stably; OMAD systematically overcomes this computational barrier by coupling a novel factorized joint entropy bound with a synchronized centralized distributional critic. Moreover, while prior works such as DPMD and SDAC~\citep{maefficient} rely on discrete-time formulations, and MaxEntRL~\citep{dongmaximum} employs deterministic ODE sampling, OMAD leverages an efficient SDE-based formulation to ensure sustained stochasticity for continuous-time denoising. Ultimately, by explicitly synchronizing decentralized updates through our novel CTDE mechanism and joint distributional critic, OMAD yields a highly robust off-policy framework. This ensures stable multi-agent coordination and, despite the existence of on-policy methods like HADQ~\citep{anonymous2025heterogeneous}, significantly enhances sample efficiency and data reuse.}

\section{Proof of Theorem~\ref{theorem:entropy}}\label{appendix:proof}
\textbf{Proof:} Notice that the definition of the entropy about the joint action distribution $a=(a^1,a^2...,a^N)\sim\bm{\pi}(a|s)$ can be written as:
\begin{equation}
\label{eq:proof_1}
\begin{aligned}
    \mathcal{H}(\bm{\pi}(a|s))&=\int_{a}-\bm{\pi}(a|s)\log (\bm{\pi}(a|s)) \text{d}a\\
    &=\int_{a}-\prod_{j=1}^{n}\pi_j(a^j|s)\log (\prod_{i=1}^{n}\pi_i(a^i|s)) \text{d}a\\
    &=\int_{a}-\prod_{j=1}^{n}\pi_j(a^j|s)\sum_{i=1}^{n}\log (\pi_i(a^i|s)) \text{d}a\\
    &=\sum_{i=1}^{n}\int_{a^1,\cdots,a^N}-\prod_{j=1}^{n}\pi_j(a^j|s)\log (\pi_i(a^i|s)) \text{d}a^1\cdots\text{d}a^N\\
    &=\sum_{i=1}^{n}\int_{a^i}-\pi_i(a^i|s)\log (\pi_i(a^i|s))\text{d}a^i\\
    &=\sum_{i=1}^{n}\mathcal{H}(\pi_i(a^i|s)).
\end{aligned}
\end{equation}

It means that the entropy of the joint policy can be split as the summation of the independent policy. Moreover, we calculate the evidence lower bound of a single-agent policy, which is the same as~\citep{celikdime} that (here we renotate that $a^i=a^i_0$ and notate the policy of agent $i$ as $\pi_{\theta_i}$ to involve the terminal timestep for the reverse diffusion process) and the corresponding forward process as $\pi$: 
\begin{equation}
\begin{aligned}
    \mathcal{H}(\pi_{\theta_i}(a^i_0|s))&=\int_{a^i_0}-\pi_{\theta_i}(a^i_0|s)\log (\pi_{\theta_i}(a^i_0|s))\text{d}a^i_0=\mathbb{E}_{\pi_{\theta_i}}[-\log (\pi_{\theta_i}(a^i_0|s))]\\
\end{aligned}
\end{equation}

Notice that $\pi_{\theta_i}(a^i_{0:H}|s)=\pi_{\theta_i}(a^i_{0}|s)\pi_{\theta_i}(a^i_{1:H}|a^i_{0},s)$, so we replace $\pi_{\theta_i}(a^i_{0}|s)=\frac{\pi_{\theta_i}(a^i_{0:H}|s)}{\pi_{\theta_i}(a^i_{1:H}|a^i_{0},s)}$ such that: 
\begin{equation}
\label{eq:proof_2}
\begin{aligned}
    \mathbb{E}_{\pi_{\theta_i}}[-\log (\pi_{\theta_i}(a^i_0|s))]&=\mathbb{E}_{\pi_{\theta_i}}\left[-\log \left(\frac{\pi_{\theta_i}(a^i_{0:H}|s)}{\pi_{\theta_i}(a^i_{1:H}|a^i_{0},s)}\right)\right]\\
    &=\mathbb{E}_{\pi_{\theta_i}}\left[\log \left(\frac{\pi_{\theta_i}(a^i_{1:H}|a^i_0,s)}{\pi_{\theta_i}(a^i_{0:H}|s)}\right)\right]\\
    &\ge \mathbb{E}_{\pi_{\theta_i}}\left[\log \left(\frac{\pi_i(a^i_{1:H}|a^i_0,s)}{\pi_{\theta_i}(a^i_{0:H}|s)}\right)\right]=l_{\pi_{\theta_i}}(a_0^i|s).
\end{aligned}
\end{equation}

The final inequality is due to the fact that:
\begin{equation}
\label{eq:proof_3}
\begin{aligned}
    \mathbb{E}_{\pi_{\theta_i}}[\log (\pi_{\theta_i}(a^i_{1:H}|a^i_0,s))]&=\mathbb{E}_{\pi_{\theta_i}}[\log (\pi_{\theta_i}(a^i_{1:H}|a^i_0,s))-\log (\pi_i(a^i_{1:H}|a^i_0,s))+\log (\pi_i(a^i_{1:H}|a^i_0,s))]\\
    &=\mathbb{KL}(\pi_{\theta_i}(a^i_{1:H}|a^i_0,s)\Vert\pi_i(a^i_{1:H}|a^i_0,s))+\mathbb{E}_{\pi_{\theta_i}}[\log (\pi_i(a^i_{1:H}|a^i_0,s))]\\
    &\ge\mathbb{E}_{\pi_{\theta_i}}[\log (\pi_i(a^i_{1:H}|a^i_0,s))].
\end{aligned}
\end{equation}

It means that $\mathbb{E}_{\pi_{\theta_i}}[\log (\pi_{\theta_i}(a^i_{1:H}|a^i_0,s))]\ge \mathbb{E}_{\pi_{\theta_i}}[\log (\pi_i(a^i_{1:H}|a^i_0,s))]$ due to the non-negativity of the KL divergence. Combine Eq.~\eqref{eq:proof_1}, Eq.~\eqref{eq:proof_2} and Eq.~\eqref{eq:proof_3}, we complete the proof. {The expectation is taken with respect to the agent’s diffusion policy $\pi_{\theta_i}$, representing the entire sequential trajectory generation process. In practice, a batch of actions is sampled by initializing from standard Gaussian noise and executing a learned iterative denoising sequence.While the forward diffusion process analytically adds Gaussian noise with predetermined schedules, the reverse denoising process is governed by the policy network $\pi_{\theta_i}$. Consequently, given the noisy intermediate actions at each timestep, the exact likelihood of the reverse Gaussian transitions is fully determined by the network, allowing for tractable optimization over the sampled batch.}

\section{Derivation of the Synchronized Policy Objective}\label{appendix:policy_loss_calculation}

In this section, we present the background and motivation for our synchronized policy objective. We define our objective function as: $J(\bm{\pi}_{\bm{\theta}}(\cdot|s)):=\mathbb{E}_{\bm{\pi_{\theta}}}\left[\sum_{t=0}^{\infty}\sum_{i=1}^{N}\gamma^t(r_t^i+\alpha l_{\pi_{\theta_i}}(a^i_{t},s_t))\right].$ Incorporating a scaled maximum entropy regularizer, we define the corresponding value function for a joint state-action pair as follows:

\begin{equation}
\begin{aligned}
    Q^{\pi}(s_t,a_t^0)=\sum_{i=1}^{N}r^i_t+\sum_{l=1}^{\infty}\gamma^l\mathbb{E}_{\rho_{\pi}}\left[\sum_{i=1}^{N}(r^i_{t+l}+\alpha l_{\pi_i}(a^i_{t+l},s_{t+l}))\right].
\end{aligned}
\end{equation}
This definition aligns with Eq. (17) in~\citep{celikdime} for joint states and actions. Given a concrete policy $\pi$, the iterative policy optimization aims to minimize the KL divergence between the policy and the exponential soft Q-values:

\begin{equation}
\begin{aligned}
    \pi^{k+1}=\arg\min_{\pi} D_{\mathrm{KL}}\!\left(
\pi(\cdot|s)\,\Big\|\,
\frac{\exp\!\left(\tfrac{1}{\alpha}Q^{\pi_k}(s,\cdot)\right)}{Z(s)}
\right),
\end{aligned}
\end{equation}

which corresponds to Eq. (5) in~\citep{maefficient}. Since the exact likelihood of the diffusion policy is intractable, we upper bound the KL divergence as (which builds upon Eq. (20) in~\citep{celikdime}):

\begin{equation}
\begin{aligned}
D_{\mathrm{KL}}\!\left(
\pi_{\theta}(\cdot|s)\,\Big\|\,
\frac{\exp\!\left(\tfrac{1}{\alpha}Q_{\phi}(s,\cdot)\right)}{Z(s)}
\right)\le D_{\mathrm{KL}}(\pi_{\theta}(a_{0:H}|s)\Vert {\pi}(a_{0:H}|s)).
\end{aligned}
\end{equation}

The trajectory distributions are factorized as:
\begin{equation}
\begin{aligned}
\pi_{\theta}(a_{0:H}|s)&=\prod_{i=1}^N\pi_{\theta_i}(a^i_{0:H}|s)=\prod_{i=1}^N\pi_{\theta_i}(a^i_{H}|s)\prod_{h=1}^H\pi_{\theta_i}(a^i_{h-1}|a^i_{h},s),\\
{\pi}(a_{0:H}|s)&=\pi(a_0|s)\prod_{h=0}^{H-1}\pi(a_{h+1}|a_h,s)=\pi(a_0|s)\prod_{h=0}^{H-1}\prod_{i=1}^N\pi(a_{h+1}^i|a_h^i,s).
\end{aligned}
\end{equation}
Here, $\pi_{\theta}(a_{0:H}|s)$ represents the probability that the action is sampled via the reverse diffusion process for each individual agent, corresponding to Eq. (13) and Eq. (14) in~\citep{celikdime}. The target policy is defined as a forward diffusion process, where $\pi(a_0|s)=\frac{1}{Z(s)}\exp\!\left(\tfrac{1}{\alpha}Q_{\phi}(s,a_0)\right)$ is the target distribution and $\pi(a_{h+1}^i|a_h^i,s)$ corresponds to the forward diffusion term. By substituting these terms into the KL divergence objective, we obtain:
\begin{equation}
\label{eq:proof_policy_final}
\begin{aligned}
\mathcal{L}(\theta)&=D_{\mathrm{KL}}(\pi_{\theta}(a_{0:H}|s)\Vert {\pi}(a_{0:H}|s))\\
&=\mathbb{E}\left[\log\left(\frac{\pi_{\theta}(a_{0:H}|s)}{{\pi}(a_{0:H}|s)}\right)\right]\\
&=\mathbb{E}\left[\log\left(\frac{\prod_{i=1}^N\pi_{\theta_i}(a^i_{H}|s)\prod_{h=1}^H\pi_{\theta_i}(a^i_{h-1}|a^i_{h},s)}{\pi(a_0|s)\prod_{h=0}^{H-1}\prod_{i=1}^N\pi(a_{h+1}^i|a_h^i,s)}\right)\right]\\
&=\mathbb{E}\Bigg[\sum_{i=1}^{N}\log \pi_{\theta_i}(a_{H}^{i}|s)-\frac{1}{\alpha}Q_{\phi}(s,a_{0})+\sum_{i=1}^{N}\sum_{h=1}^{H}\log
\frac{
\pi_{\theta_i}(a_{h-1}^{i}| a_{h}^{i}, s)
}{
\pi_i(a_{h}^{i}| a_{h-1}^{i}, s)
}\Bigg]+\log Z(s).
\end{aligned}
\end{equation}
Note that the target policy is defined based on the joint distributional value $Q_{\phi}(s,a)=\mathbb{E}[Z_{\phi}(s,a)]$, implying that the loss function is shared across all agents. Crucially, this design choice is pivotal for coordination: while we employ a factorized entropy lower bound for computational tractability in Eq.~\eqref{eq:proof_1}, the multi-agent coordination is strictly enforced by the global guidance of this centralized distributional critic $Z_\phi(s, a)$. In contrast to independent learning methods, the joint critic explicitly models the complex, multi-modal correlations of agent interactions and backpropagates a unified gradient signal, thereby synchronizing the decentralized diffusion processes towards a coherent joint equilibrium.

Given this strong coordination signal, while applying value factorization techniques could theoretically enable distributed policy optimization by assigning appropriate credits to different agents, our empirical results suggest that these techniques are insufficient to guide the policy toward optimal behaviors in this setting compared to our joint critic approach. Furthermore, leveraging an off-policy paradigm, our framework allows for repeated sampling from the replay buffer, thereby maximizing data utility and significantly outperforming on-policy counterparts in sample efficiency.

\section{Details about the Experiments}\label{appendix:experiment}

\subsection{Experimental Setup: Environments}\label{appendix:env}

\begin{figure*}[ht]
\begin{center}
\centering
\subfloat[Cooperative Navigation]{
\includegraphics[width=0.3\textwidth]{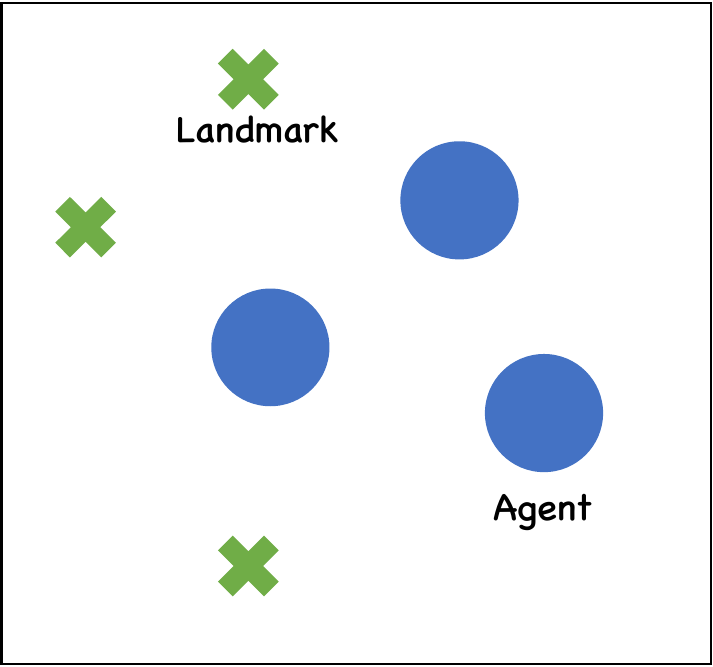}
\label{Fig3-1}
}
\quad
\subfloat[Physical Deception]{
\includegraphics[width=0.3\textwidth]{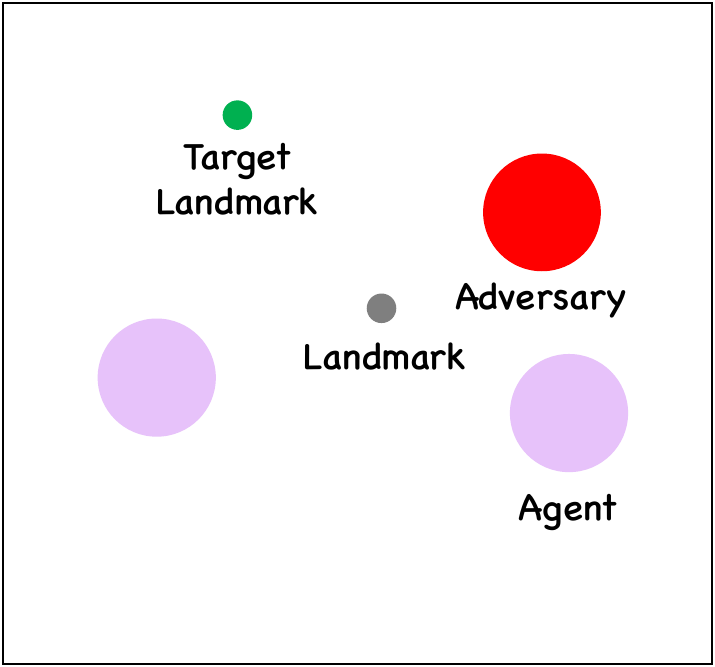}
\label{Fig3-2}
}
\quad
\subfloat[Ant]{
\includegraphics[width=0.3\textwidth]{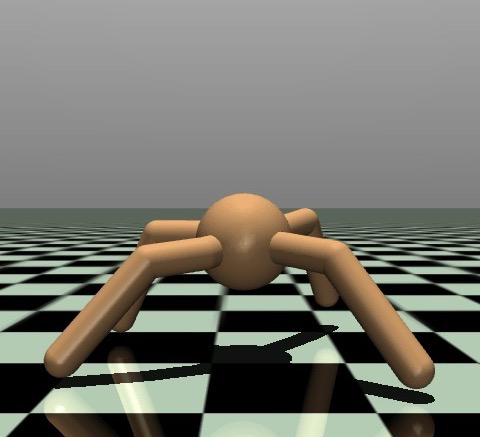}
\label{Fig3-3}
}
\quad
\subfloat[HalfCheetah]{
\includegraphics[width=0.3\textwidth]{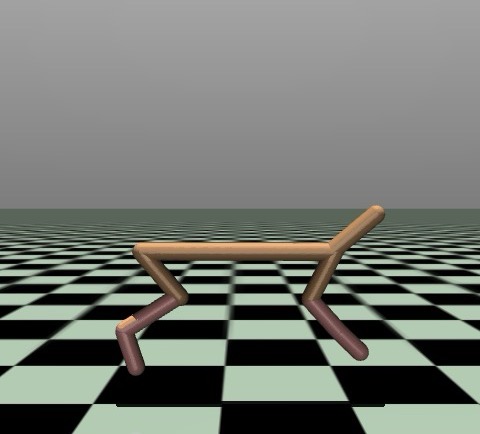}
\label{Fig3-4}
}
\quad
\subfloat[Walker2d]{
\includegraphics[width=0.3\textwidth]{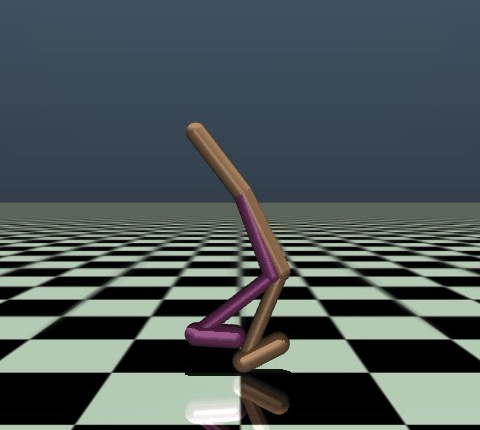}
\label{Fig3-5}
}
\quad
\subfloat[Swimmer]{
\includegraphics[width=0.3\textwidth]{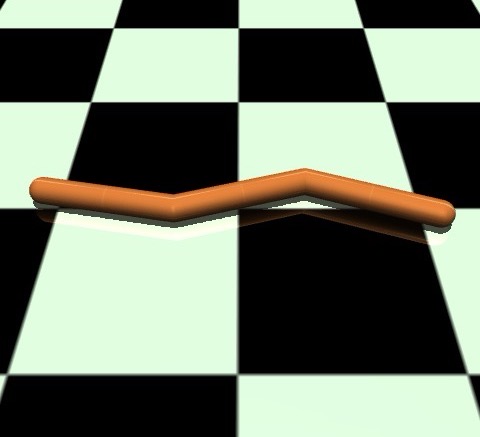}
\label{Fig3-6}
}
\caption{Multi-agent particle environments (MPE) and Multi-agent HalfCheetah task in MuJoCo Environment (MAMuJoCo).}
\label{fig:environments}
\end{center}
\end{figure*}

We evaluate our algorithm and baselines on two standard benchmarks: the Multi-Agent Particle Environments (MPE)~\citep{lowe2017multi}\footnote{https://pettingzoo.farama.org/environments/mpe/} and Multi-Agent MuJoCo (MAMuJoCo)~\citep{peng2021facmac}\footnote{https://robotics.farama.org/envs/MaMuJoCo/}. Within the MPE domain, we select \textit{Cooperative Navigation} and \textit{Physical Deception} as representative scenarios. In Cooperative Navigation shown as Figure~\ref{Fig3-1}, agents must coordinate to occupy landmarks while avoiding collisions. In Physical Deception shown as Figure~\ref{Fig3-2}, agents collaborate to reach a target landmark while concealing its identity from an adversary.

In the Multi-Agent MuJoCo (MAMuJoCo) domain, we evaluate our method across a diverse set of locomotion scenarios, including \textit{Ant} in Figure~\ref{Fig3-3}, \textit{HalfCheetah} in Figure~\ref{Fig3-4}, \textit{Walker2d} in Figure~\ref{Fig3-5}, and \textit{Swimmer} in Figure~\ref{Fig3-6}. In these tasks, the original high-dimensional action space of a single robot is partitioned among multiple decentralized agents. This partitioning necessitates complex coordination, as agents must infer the intentions of others and align their local policies to achieve stable and rapid global locomotion. The specific characteristics of each environment are detailed below:

\textbf{Ant} ($2\times4$, $2\times4$d, $4\times2$)
A 3D quadrupedal robot with high-dimensional state-action spaces. Different from planar tasks, the Ant requires agents to maintain balance in three dimensions while coordinating four multi-joint legs. The challenge lies in synchronizing independent limb movements to generate forward velocity without toppling the robot.

\textbf{HalfCheetah} ($2\times3$, $6\times1$)
A planar biped designed for high-speed running. In multi-agent configurations, the primary difficulty is synthesizing a coherent, rhythmic gait from partitioned actuation (e.g., separate agents for thighs and feet). Agents must maximize forward momentum through emergent cooperation without the strict balance constraints.

\textbf{Walker2d}
A bipedal walker that introduces significant stability constraints. In contrast to the HalfCheetah, the Walker2d must maintain an upright torso to avoid falling. Agents controlling different leg segments face a difficult credit assignment problem, as they must balance the trade-off between generating forward thrust and actively stabilizing the robot's pitch.

\textbf{Swimmer}
Locomotion in a viscous fluid governed by fluid dynamics rather than ground reaction forces. Agents must coordinate to produce undulatory, snake-like motion. The task requires precise temporal synchronization along the kinematic chain to leverage fluid friction effectively, as uncoordinated actions result in negligible displacement.

The fundamental distinction between the standard MuJoCo benchmark and MAMuJoCo lies in the partitioning of the control authority. In the single-agent setting, a monolithic policy governs the entire kinematic chain, implicitly handling the coordination between joints through centralized optimization. In contrast, MAMuJoCo transforms these tasks into decentralized coordination problems where the action space is split among independent agents. This introduces significant challenges: (1) \textit{Physical Coupling}: due to the rigid-body dynamics, an action taken by one agent instantaneously alters the state dynamics experienced by others, creating a highly non-stationary environment; (2) \textit{Implicit Coordination}: agents must achieve global synchronization (e.g., a stable gait) purely through local observations and rewards, without explicit communication channels. Success in this domain thus demonstrates an algorithm's capability to solve complex credit assignment problems in high-dimensional continuous control tasks.

\begin{table}[ht]
\caption{Hyperparameters in the MPE and MAMuJoCo environment for all tasks. 
}
\label{tab:hyperparameters_1}
\begin{center}
	\begin{tabular}{cc}
	\toprule 
		Name of the Hyperparameter&Value\\ 
		\midrule  
  Warmup Timesteps&$60000$\\
  \midrule
  Timesteps to Start Learning $L_{\text{init}}$& $5000$\\
  \midrule 
  Policy Update $\rho$&$0.0$\\
  \midrule
  Discount Factor $\gamma$&$0.99$\\
  \midrule
  Policy Delay $d_l$ & $3$\\
  \midrule
  Batch Size & $256$\\
  \midrule  
  Buffer Size of $\mathcal{D}$ & $1000000$\\
  \midrule
  Initial Entropy Coefficient Term $\alpha$ & $1.0$\\
  \midrule
  Target Entropy $\mathcal{H}_{\text{target}}$ & $4\text{dim}(\mathcal{A})$\\
  \midrule
  Number of Atoms for Distributional Q-Learning $N_{\text{atoms}}$ & $100$\\
  \midrule
  Coefficient Term for Distributional Q-Learning Regularization $\xi$ & $0.005$\\
  \midrule
  Batch Normalization Momentum & $0.99$\\
  \midrule
  Batch Normalization Warmup Timesteps & $100000$\\
  \midrule
  Optimizer & Adam\\
  \midrule
  Adam $\beta_1$ & $0.5$\\
  \midrule
  Adam $\beta_2$ & $0.999$\\
  \midrule
  Gradient Clip Norm & $1.0$\\
  \midrule
  Diffusion Denoise Steps $H$ & $8$\\

  \bottomrule  
	\end{tabular}
\end{center}
\end{table}

\begin{table}[ht]
\caption{Hyperparameters in the MPE and MAMuJoCo environment for all tasks including the learning rate for the actor and critic, and the maximum value $V_{\text{max}}$ for distribution Q-Learning.
}
\label{tab:hyperparameters_2}
\begin{center}
	\begin{tabular}{ccc}
	\toprule 
		Task&Learning Rate&Maximum Value $V_{\text{max}}$\\ 
		\midrule  
  Cooperative Navigation $3$ Agents&$3.5\times10^{-5}$&$200$\\
  \midrule 
  Cooperative Navigation $4$ Agents&$2.5\times10^{-5}$&$200$\\
  \midrule
  Physical Deception $2$ Agents&$5.0\times10^{-6}$&$200$\\
  \midrule
  Ant $2\times4$&$7.0\times10^{-6}$&$1200$\\
  \midrule
  Ant $2\times4$d&$2.0\times10^{-5}$&$5000$\\
  \midrule
  Ant $4\times2$&$5.0\times10^{-6}$&$3000$\\
  \midrule
  HalfCheetah $2\times3$&$1.0\times10^{-4}$&$20000$\\
  \midrule
  HalfCheetah $6\times1$&$1.0\times10^{-3}$&$3000$\\
  \midrule
  Walker2d $2\times3$&$5.0\times10^{-6}$&$50000$\\
  \midrule
  Swimmer $2\times1$&$2.0\times10^{-5}$&$400$\\
  
  \bottomrule  
	\end{tabular}
\end{center}
\end{table}

\subsection{Experimental Setup: Network Structures and Hyperparameters }\label{appendix:hyperparameters}
For the critic, we employ a Multi-Layer Perceptron (MLP) to model the distributional Q-function. Following the CrossQ technique~\citep{bhattcrossq}, we concatenate the state-action pairs and normalize them via a Batch Normalization layer before feeding them into the MLP. The network consists of two hidden layers with $2048$ units each, utilizing ReLU activations and Batch Normalization. The final output is processed via a softmax function to generate the probability distribution over the atoms. The support of the distribution is defined over $[V_{\text{min}}, V_{\text{max}}]$, where $V_{\text{min}} = -V_{\text{max}}$ for all tasks.

For the actor, we utilize a diffusion-based policy parameterized by a noise prediction network $f_{\theta_{i}}(a_t^i, s, t)$. This network is implemented as a $3$-layer MLP with GeLU activations and a hidden dimension of $256$. The input consists of the concatenated state, noisy action, and a 256-dimensional Fourier timestep embedding. We employ a cosine noise schedule with $\beta_{\text{min}}=10^{-3}$ and $\beta_{\text{max}}=0.9999$. The number of diffusion steps $H$ is set to $8$ for both training and inference. Similar to the critic, the input vector undergoes Batch Normalization prior to entering the MLP layers. We emphasize that while our centralized distributional critic adopts the CrossQ architecture to forego the target Q-network, we explicitly retain the target policy network. This design ensures training stability during data collection and stabilizes the joint optimization of the critic and actor. Crucially, sampling next-step actions from a slowly updating target policy mitigates the high variance arising from the compounded stochasticity of interacting diffusion agents, thereby preventing divergence in value estimation.

To facilitate full reproducibility, we provide a detailed breakdown of the experimental configurations. Table~\ref{tab:hyperparameters_1} summarizes the common hyperparameters applied uniformly across all tasks, such as network architecture and optimizer settings. In contrast, Table~\ref{tab:hyperparameters_2} details the task-specific hyperparameters, including learning rates and distributional value limits, which are tailored to each environment. All reported experiments were conducted using $5$ independent random seeds to ensure statistical reliability. We clarify that setting $\rho=0$ reduces the target network synchronization to a standard hard update.

\subsection{Additional Experimental Results and Visualization}

\subsubsection{Quantitative Comparison}

To rigorously evaluate the asymptotic performance and robustness of our approach, we conduct extensive experiments using high-performance computing resources. 
To highlight OMAD's efficiency, the training horizon for baselines is extended to $1 \times 10^7$ steps to guarantee convergence. In contrast, OMAD requires only within just $3 \times 10^6$ steps (except for HalfCheetah $6\times1$, which uses $1 \times 10^6$).
As detailed in Table~\ref{tab:performance}, OMAD consistently outperforms all baselines despite their significantly longer training horizons. We report the maximum average episode returns over $5$ random seeds, highlighting the optimal performance in bold. These results show that OMAD achieves state-of-the-art performance across nearly all MPE and MAMuJoCo tasks with low variance. This comprehensive evaluation underscores that OMAD not only excels in sample efficiency but also delivers superior final performance, validating its effectiveness for long-horizon multi-agent control.

\begin{table}[htbp]
\caption{Quantitative comparison of asymptotic performance on MPE and MAMuJoCo benchmarks. We report the maximum average episode returns and the standard deviation over $5$ random seeds. Notably, OMAD achieves state-of-the-art results despite being trained for significantly fewer timesteps ($3 \times 10^6$) compared to the extended horizon of baselines ($1 \times 10^7$), demonstrating exceptional sample efficiency. An exception is HalfCheetah $6\times1$, where diffusion models were evaluated at $1 \times 10^6$ steps. Bold indicates optimal performance (within $1\%$ gap).
}
\label{tab:performance}
\begin{center}
\resizebox{\linewidth}{!}{
	\begin{tabular}{cccccc}
	\toprule 
		Task&HATD3&HASAC&MADPMD&MASDAC&OMAD(Ours)\\ 
		\midrule  
  \makecell{Cooperative Navigation \\ $N=3$}&$-26.2\pm3.5$&$-25.1\pm1.3$&$-29.4\pm1.6$&$-41.0\pm1.1$&$\bm{-23.9\pm1.1}$\\
  \midrule 
  \makecell{Cooperative Navigation\\$N=4$}&$-63.0\pm2.2$&$-65.6\pm3.5$&$-85.0\pm3.8$&$-84.6\pm3.1$&$\bm{-57.9\pm0.8}$\\
  \midrule
  \makecell{Physical Deception\\$N=2$}&$43.0\pm3.6$&$\bm{45.4\pm3.1}$&$36.9\pm4.5$&$33.1\pm5.1$&$\bm{45.1\pm4.1}$\\
  \midrule
  Ant $2\times4$&$6151.9\pm408.9$&$6980.4\pm458.5$&$7042.7\pm379.8$&$7266.2\pm431.5$&$\bm{7517.0\pm279.2}$\\
  \midrule
  Ant $2\times4$d&$4992.1\pm595.2$&$4941.9\pm320.0$&$6672.5\pm467.5$&$6936.2\pm309.7$&$\bm{7449.3\pm169.3}$\\
  \midrule
  Ant $4\times2$&$5911.2\pm254.0$&$6597.1\pm418.7$&$7139.9\pm462.1$&$6198.5\pm1012.1$&$\bm{7403.7\pm41.7}$\\
  \midrule
  HalfCheetah $2\times3$&$8732.6\pm413.6$&$8835.7\pm356.3$&$10305.5\pm735.9$&$10540.7\pm963.8$&$\bm{14368.5\pm1166.0}$\\
  \midrule
  HalfCheetah $6\times1$&$8453.7\pm503.4$&$8714.3\pm424.6$&$10402.8\pm622.4$&$10049.9\pm751.4$&$\bm{11044.3\pm249.6}$\\
  \midrule
  Walker2d $2\times3$&$6248.6\pm622.3$&$6753.5\pm436.2$&$6112.8\pm269.3$&$\bm{8154.4\pm322.5}$&$\bm{8180.4\pm803.5}$\\
  \midrule
  Swimmer $2\times1$&$98.5\pm52.3$&$134.2\pm23.3$&$136.8\pm7.6$&$149.3\pm12.6$&$\bm{162.0\pm22.5}$\\
  \bottomrule  
	\end{tabular}}
\end{center}
\end{table}

These comparative results offer implicit insights into the structural determinants of multi-agent coordination. Notably, the diffusion-based baselines (MADPMD and MASDAC) were implemented using a centralized training and centralized execution paradigm. While this approach theoretically maximizes coordination by circumventing the limitations of distributed policies, their suboptimal performance underscores that merely incorporating expressive diffusion policies is insufficient without a robust value estimation mechanism and efficient design of the diffusion policy architecture.

{We argue that an advanced distributional critic alone is also insufficient, as unimodal Gaussian policies inevitably suffer from sub-optimal mean-seeking behavior in multi-modal multi-agent landscapes. To empirically validate this, our evaluation on the 3-agent Cooperative Navigation environment demonstrates that a standard Gaussian policy only achieves a score of $-26.1 \pm 2.9$, whereas our diffusion-based approach significantly improves coordination to reach $-23.9 \pm 1.1$. OMAD's superior performance stems from the synergy between the distributional critic and the expressive diffusion policy, which naturally models multi-modality to fully leverage the critic's rich guidance.}

{Regarding the policy structure, while OMAD employs an SDE-based formulation to ensure sustained stochasticity, identifying network architectures that maximize the trade-off between runtime efficiency and control performance remains vital. Specifically, although the iterative denoising process inherent in diffusion models increases per-step computational overhead, this drawback is heavily offset by OMAD’s superior sample efficiency. In practice, our method achieves convergence in just 3M steps, whereas traditional baselines require up to 10M steps, demonstrating a trade-off that strongly favors overall training speed and final performance.}

Consequently, while improving online sampling efficiency remains a promising future direction, the theoretical implications of the ELBO approximation error warrant further scrutiny. While the intractable likelihood of diffusion models precludes a precise quantification of the variational approximation gap during training, this error theoretically vanishes as the learned policy converges to the optimal target distribution. Deriving tighter lower bounds to mitigate potential bias thus remains a valuable direction for theoretical refinement.

Moreover, in terms of computational cost, although hardware variations introduce some noise in wall-clock measurements, OMAD generally exhibits superior performance compared to HASAC and HATD3 within equivalent timeframes. Nevertheless, developing techniques to further optimize the wall-clock training efficiency of diffusion-based agents remains a vital direction for enhancing the scalability of this framework.

{It is worth noting that this work explicitly focuses on efficient online multi-agent coordination within the continuous control domain. To this end, our framework is validated on representative continuous benchmarks, including MPE and MAMuJoCo, which pose highly complex coordination challenges through continuous physical coupling. We deliberately exclude discrete-action benchmarks, e.g.,  SMAC, as our continuous-time diffusion framework is structurally tailored for continuous action spaces. Extending this framework to discrete environments (e.g., via discrete diffusion variants or categorical mappings to evaluate on tasks such as SMAC) presents a promising avenue for future research to further broaden its applicability.}

\subsubsection{Additional Ablation Studies on HalfCheetah}\label{appendix:ablation_study}
To rigorously validate the robustness of OMAD across diverse dynamic characteristics, we present additional ablation studies on the heavily coupled \textit{HalfCheetah} $6\times1$ task. While it is a common practice in computationally expensive multi-agent reinforcement learning (MARL) literature to conduct comprehensive ablations on a single, highly representative environment (e.g., HASAC), we extend our evaluation here to demonstrate OMAD's stability under entirely different environment dynamics.

Specifically, we investigate the sensitivity of three core components of our framework: (1) the \textbf{number of atoms} (the granularity of the distributional value function); (2) the \textbf{value bounds} (i.e., $V_{\max}$, the range of the return distribution); and (3) the \textbf{entropy coefficient} (i.e., $\alpha$, the scaling factor for policy exploration). As detailed in Tables~\ref{tab:ablation_atoms}, \ref{tab:ablation_vmax}, and \ref{tab:ablation_entropy}, our default hyperparameter configuration, which consists of $100$ atoms, $V_{\max} = 3000$, and automatic entropy tuning (\texttt{Auto}), consistently yields the optimal episodic performance. These results underscore the stability and robustness of OMAD across challenging coordinate-dependent control tasks.

\begin{table}[htbp]
\centering
\caption{Ablation study on the number of atoms (\textit{HalfCheetah} $6\times1$).}
\label{tab:ablation_atoms}
\small
\begin{tabular}{lccccc}
\toprule
\textbf{Number of Atoms} & 25 & 50 & \textbf{100 (Default)} & 150 & 200 \\
\midrule
\textbf{Performance} & $11004.9 \pm 266.2$ & $10786.9 \pm 165.7$ & $\mathbf{11064.6 \pm 298.8}$ & $6346.0 \pm 283.4$ & $6694.6 \pm 78.7$ \\
\bottomrule
\end{tabular}
\end{table}

As detailed in Tables~\ref{tab:ablation_atoms}, \ref{tab:ablation_vmax}, and \ref{tab:ablation_entropy}, our default hyperparameter configuration---consisting of $100$ atoms, $V_{\max} = 3000$, and automatic entropy tuning (\texttt{Auto})---consistently yields the optimal episodic performance. These results underscore the stability and robustness of OMAD across challenging coordinate-dependent control tasks.

\begin{table}[htbp]
\centering
\caption{Ablation study on the value bound $V_{\max}$ (\textit{HalfCheetah} $6\times1$).}
\label{tab:ablation_vmax}
\small
\begin{tabular}{lccccc}
\toprule
\textbf{$V_{\max}$ Value} & 2000 & 2500 & \textbf{3000 (Default)} & 3500 & 4000 \\
\midrule
\textbf{Performance} & $5290.1 \pm 68.1$ & $5361.2 \pm 173.4$ & $\mathbf{11064.6 \pm 298.8}$ & $7696.5 \pm 383.0$ & $8772.9 \pm 1031.0$ \\
\bottomrule
\end{tabular}
\end{table}

\begin{table}[htbp]
\centering
\caption{Ablation study on the entropy coefficient $\alpha$ (\textit{HalfCheetah} $6\times1$).}
\label{tab:ablation_entropy}
\small
\begin{tabular}{lccccc}
\toprule
\textbf{Entropy Coeff. ($\alpha$)} & 0.001 & 0.01 & 0.025 & 0.1 & \textbf{Auto (Default)} \\
\midrule
\textbf{Performance} & $8829.2 \pm 1994.0$ & $8486.9 \pm 2735.2$ & $10635.2 \pm 478.5$ & $9175.8 \pm 2074.3$ & $\mathbf{11064.6 \pm 298.8}$ \\
\bottomrule
\end{tabular}
\end{table}

\subsubsection{Scalability}

{Scaling multi-agent reinforcement learning (MARL) systems presents profound computational and theoretical challenges, particularly as the number of agents, $N$, increases. The complexity compounds across three primary dimensions: critic optimization, policy optimization, and inference. Specifically, the input dimension of the centralized value function grows linearly, while the joint action space expands exponentially, creating a combinatorial explosion that demands substantially larger network capacities and exacerbates optimization instability. Furthermore, because iterative denoising steps in diffusion policies scale linearly in either inference latency (if executed serially) or memory consumption (if executed in parallel), scaling to massive systems (e.g., $N\gg100$) remains a formidable open problem for the entire MARL community. Conducting standard ablation studies on the number of agents is also fundamentally impractical in continuous control benchmarks like MAMuJoCo; arbitrarily adding agents completely alters the underlying physical morphology and dynamics of the task, rendering direct performance comparisons meaningless. While prior works have explored mean-field approximations and dimensionality reduction techniques to alleviate these burdens, achieving optimal control in massive, continuous multi-agent systems remains an unresolved frontier.}

\begin{table}[ht]
\caption{Performance comparison with more agents in Cooperative Navigation tasks with $10$ agents.}
\centering
\begin{tabular}{lccccc}
\toprule
{Algorithm} & {HATD3} & {HASAC} & MADPMD & MASDAC & OMAD(Ours)\\
\midrule
Performance & $-464.3\pm16.8$ & $-471.3\pm14.0$ & $-489.3\pm33.5$& $-483.5\pm21.3$ & $\mathbf{-445.1\pm3.3}$ \\ 
\bottomrule
\end{tabular}
\label{tab:scale}
\end{table}

{Despite these inherent, field-wide hurdles, OMAD exhibits robust scalability and exceptional coordination capabilities within and beyond standard continuous control bounds. Standard evaluation environments typically feature five or fewer agents to rigorously test coordination without overwhelming the state-action space; however, OMAD demonstrates significant advantages in more complex settings, such as the 6-agent HalfCheetah task. To further validate its scalability, we evaluated OMAD in a denser 10-agent Cooperative Navigation environment shown in Table~\ref{tab:scale}. In this setting, OMAD successfully navigated the expanded joint action space to achieve state-of-the-art optimal performance ($-445.1 \pm 3.3$), exhibiting both higher returns and significantly lower variance compared to strong baselines like HATD3, HASAC, and naive multi-agent diffusion extensions (e.g., MADPMD, MASDAC). While the primary computational overhead in our algorithm natively stems from the diffusion generation process, OMAD effectively manages this by balancing the trade-offs between denoising steps and policy performance. Ultimately, OMAD provides a highly stable, capable framework for continuous multi-agent coordination without succumbing to the compounded variance that typically plagues large-scale off-policy learning.}

\subsubsection{Visualization}

\begin{figure*}[htp]
\begin{center}
\centering
\subfloat[Ant $4\times2$ Return $=7521.9$]{
\includegraphics[width=1.0\textwidth]{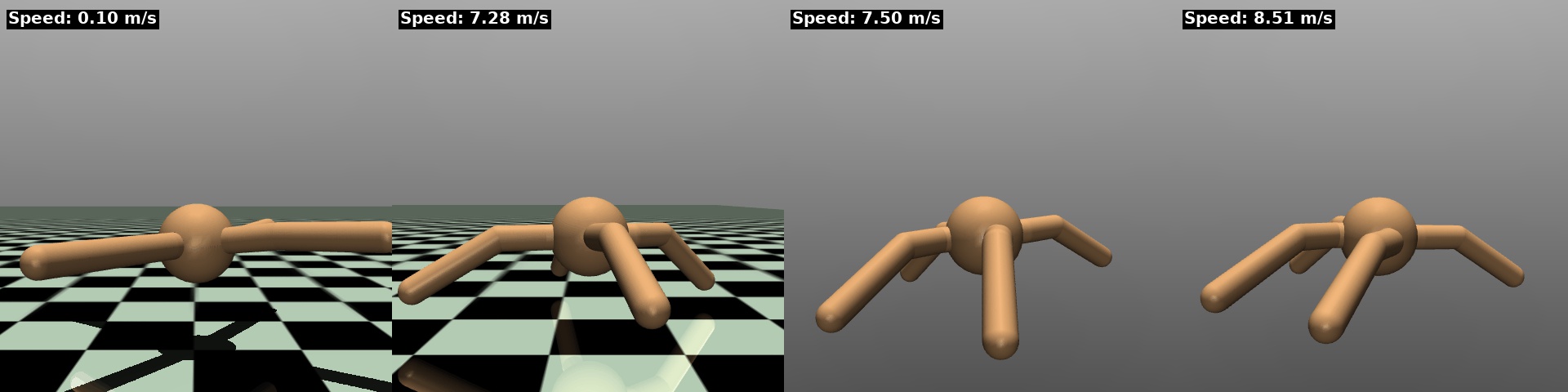}
\label{Fig4-3}
}
\quad
\subfloat[HalfCheetah $6\times1$ Return $=12499.8$]{
\includegraphics[width=1.0\textwidth]{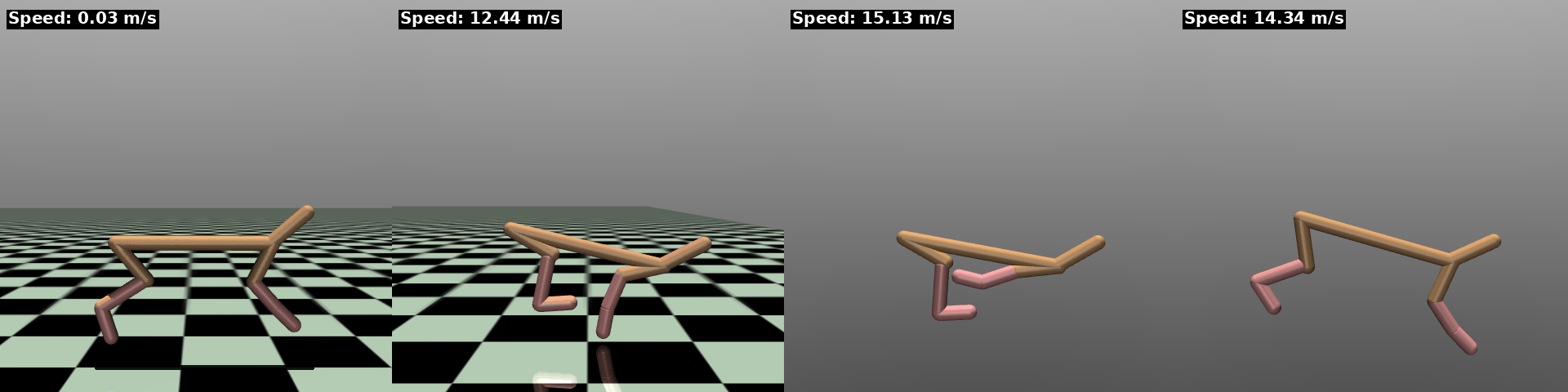}
\label{Fig4-4}
}
\quad
\subfloat[Walker2d $2\times3$ Return $=10182.4$]{
\includegraphics[width=1.0\textwidth]{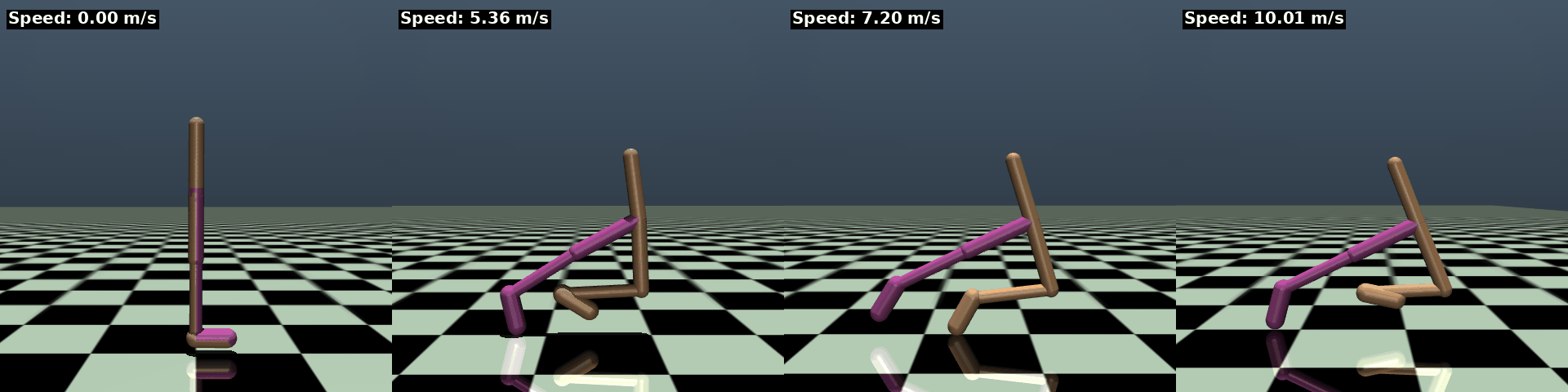}
\label{Fig4-5}
}
\quad
\subfloat[Swimmer $2\times1$ Return $=156.6$]{
\includegraphics[width=1.0\textwidth]{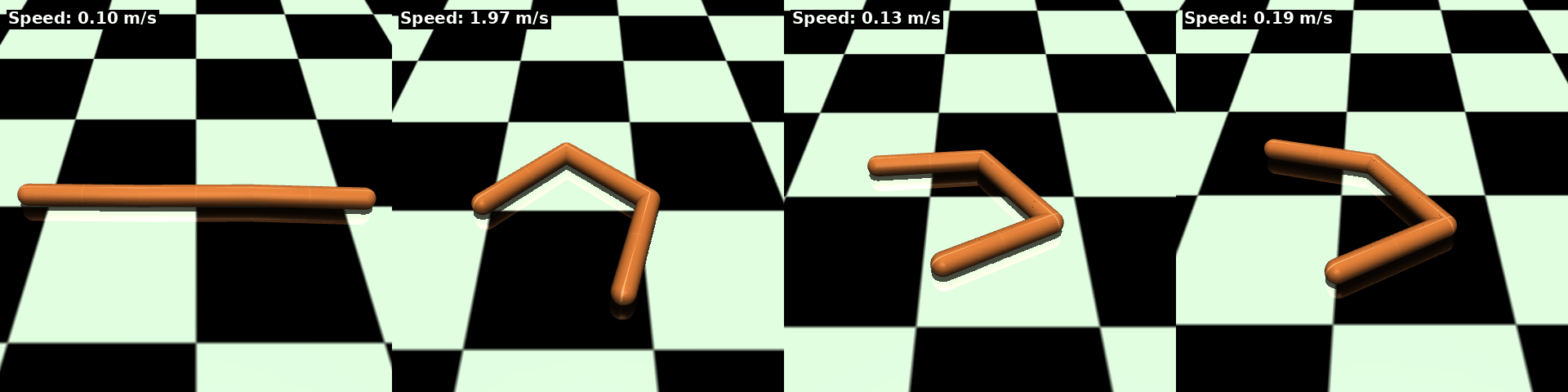}
\label{Fig4-6}
}
\caption{Visualization of learned diffusion policies across four distinct MAMuJoCo tasks. We display snapshots of the agents at timesteps $t \in \{1, 100, 250, 500\}$ with the instantaneous velocity, demonstrating the stable and coordinated behaviors achieved by our OMAD algorithm.}
\label{fig:visualization}
\end{center}
\end{figure*}

To further validate the effectiveness of OMAD beyond numerical metrics, we provide a qualitative visualization of the learned behaviors in Figure~\ref{fig:visualization}. The figure displays the temporal evolution of agent policies across four distinct MAMuJoCo tasks (Ant $4\times2$, HalfCheetah $6\times1$, Walker2d $2\times3$, and Swimmer $2\times1$ as the representative tasks) at timesteps $t \in \{1, 100, 250, 500\}$. As observed, the agents rapidly transition from initial states to stable, high-velocity locomotion. Notably, in complex scenarios such as Ant $4\times2$ and HalfCheetah $6\times1$, the physically decoupled agents exhibit remarkable inter-agent coordination, effectively synchronizing their joint movements to maintain balance and maximize forward momentum without conflicts. The high episode returns annotated in the figure (e.g., ${12499.8}$ for HalfCheetah and ${7521.9}$ for Ant) significantly surpass the performance of competing multi-agent baselines reported in Table~\ref{tab:performance}. These visual results confirm that OMAD not only optimizes for scalar rewards but also successfully masters the intricate dynamics required for robust and cooperative multi-agent control.

\end{document}